\newtheorem{thm}{Theorem}
\newtheorem{lem}[thm]{Lemma}
\newtheorem*{defn*}{Definition}
\newtheorem*{thm*}{Theorem}
\renewcommand{\>}{{\rightarrow}}
\newcommand{\half}{{\frac {1}{2}}}
\newcommand{\argmax}{\textup{\textrm{argmax}}}
\newcommand{\argmin}{\textup{\textrm{argmin}}}
\newcommand{\R}{{\mathbb R}}
\newcommand{\Z}{{\mathbb Z}}
\newcommand{\N}{{\mathbb N}}
\newcommand{\E}{{\mathbf E}}
\newcommand{\I}{{\mathbf I}}
\newcommand{\X}{{\mathcal X}}
\newcommand{\Y}{{\mathcal Y}}
\newcommand{\T}{{\mathcal T}}
\renewcommand{\L}{{\mathbf L}}
\newcommand{\cN}{{\mathcal N}}
\newcommand{\U}{{\mathcal U}}
\newcommand{\ip}{{\xrightarrow{\textrm{P}}}}
\renewcommand{\S}{{\mathcal S}}
\newcommand{\1}{{\mathbf 1}}
\newcommand{\0}{{\mathbf 0}}
\renewcommand{\b}{{\mathbf b}}
\newcommand{\e}{{\mathbf e}}
\newcommand{\f}{{\mathbf f}}
\newcommand{\g}{{\mathbf g}}
\newcommand{\p}{{\mathbf p}}
\renewcommand{\u}{{\mathbf u}}
\renewcommand{\v}{{\mathbf v}}
\newcommand{\w}{{\mathbf w}}
\newcommand{\x}{{\mathbf x}}
\newcommand{\bell}{{\boldsymbol \ell}}
\newcommand{\balpha}{{\boldsymbol \alpha}}
\newcommand{\bpsi}{{\boldsymbol \psi}}
\newcommand{\sign}{\textup{\textrm{sign}}}
\newcommand{\pred}{\textup{\textrm{pred}}}
\newcommand{\er}{\textup{\textrm{er}}}
\newcommand{\CS}{\textup{\textrm{CS}}}
\newcommand{\BEP}{\textup{\textrm{BEP}}}
\newcommand{\OVA}{\textup{\textrm{OVA}}}
\icmltitlerunning{Consistent Algorithms for Multiclass Classification with a Reject Option }
\begin{document} 

\twocolumn[
\icmltitle{Consistent Algorithms for Multiclass Classification with a Reject Option}

\icmlauthor{Harish G. Ramaswamy}{harish\_gurup@csa.iisc.ernet.in}\vspace{-0.2em}
\icmladdress{Indian Institute of Science, Bangalore, INDIA}\vspace{-0.3em}
\icmlauthor{Ambuj Tewari}{tewaria@umich.edu}\vspace{-0.2em}
\icmladdress{University of Michigan, Ann Arbor, USA}\vspace{-0.3em}
\icmlauthor{Shivani Agarwal}{shivani@csa.iisc.ernet.in}\vspace{-0.2em}
\icmladdress{Indian Institute of Science, Bangalore, INDIA}\vspace{-0.3em}
    
\icmlkeywords{statistical consistency, machine learning, abstain loss, ICML}

\vskip 0.3in
]

\begin{abstract}
We consider the problem of $n$-class classification ($n\geq 2$), where the classifier can choose to abstain from making predictions at a given cost, say, a factor $\alpha$ of the cost of misclassification. Designing consistent algorithms for such $n$-class classification problems with a `reject option' is the main goal of this paper, thereby extending and generalizing previously known results for $n=2$. We show that the Crammer-Singer surrogate and the one vs all hinge loss, albeit with a different predictor than the standard argmax, yield consistent algorithms for this problem when $\alpha=\half$. More interestingly, we design a new convex surrogate  that is also consistent for this problem when $\alpha=\half$ and operates on a much lower dimensional space ($\log(n)$ as opposed to $n$). We also generalize all three surrogates to be consistent for any $\alpha\in[0, \half]$. 
\end{abstract}
\section{Introduction}
In classification problems, one often encounters cases where it would be better for the classifier to take no decision and abstain from predicting rather than making a wrong prediction. For example, in the problem of medical diagnosis with inexpensive tests as features, a conclusive decision is good, but in the face of uncertainty it is better to not make a prediction and go for costlier tests. 

For the case of binary actions, this problem has been called `classification with a reject option' \cite{BarWeg08,YuanWeg10,Grandvalet+08,FumeraRoli02,FumeraRoli04,Fumera+00,Fumera+03,Golfarelli+97}. Yuan and Wegkamp  \yrcite{YuanWeg10} show that many standard convex optimization based procedures for binary classification like logistic regression, least squares classification and exponential loss minimization (Adaboost)  yield consistent algorithms for this problem. But as Bartlett and Wegkamp \yrcite{BarWeg08}  show, the algorithm based on minimizing the hinge loss (SVM) requires a modification to be consistent. The suggested modification is rather simple -- use a double hinge loss with three linear segments instead of the two segments in standard hinge loss, the ratio of slopes of the two non-flat segments depends on the cost of abstaining $\alpha$. 

In the case of multiclass classification however there exist no such results and it is not straightforward to generalize the double hinge loss to this setting. To the best of our knowledge, there has been only empirical and heuristic work on multiclass version of this problem, \cite{Zou+11,Simeone+12,Wu+07}. In this paper, we give a formal treatment of the multiclass problem with a 'reject' option and provide consistent algorithms for this problem.


The reject option is accommodated into the problem of $n$-class classification through the evaluation metric. We now seek a function $h:\X\> \{1,2,\ldots,n,n+1\}$, where $\X$ is the instance space, and the $n$ classes are denoted by $\{1,2,\ldots,n\}=[n]$ and $n+1$ denotes the action of abstaining or the `reject' option. The loss incurred by such a function on an example $(x,y)$ with $h(x)=t$ is given by
\begin{equation}
 \ell^\alpha(y,t)=\begin{cases}
               1 & \text{ if } t\neq y \text{ and } t\neq n+1 \\
               \alpha & \text{ if } t= n+1  \\
               0 & \text{ if } t= y  
              \end{cases}
\label{eqn:abstain-loss}
\end{equation}
where $\alpha\in[0,1]$ denotes the cost of abstaining. We will call this loss the abstain$\left(\alpha\right)$ loss.

It can be easily shown that the Bayes optimal risk for the above loss is attained by the function \mbox{$h^*_\alpha:\X\>\{1,\ldots,n+1\}$} given by
\begin{equation}
 h^*_\alpha(x)=\begin{cases}
               \argmax_{y\in[n]} p_{x}(y) & \text{ if } \max_{y\in[n]} p_{x}(y) \geq 1-\alpha \\
               n+1 & \text{ Otherwise}
               \end{cases}
\label{eqn:Bayes-abstain}
\end{equation}
where $p_{x}(y)=P(Y=y|X=x)$. The above can be seen as a natural extension of the `Chow's rule' \cite{Chow70} for the binary case. It can also be seen that the interesting range of values for $\alpha$ is $[0,\frac{n-1}{n}]$ as for all $\alpha>\frac{n-1}{n}$ the Bayes optimal classifier for the abstain($\alpha$) loss never abstains. For example, in binary classification, only $\alpha\leq \half$ is meaningful, as higher values of $\alpha$ imply it is never optimal to abstain. 

For small $\alpha$, the classifier  $h^*_\alpha$ acts as a high-confidence classifier and would be useful in applications like medical diagnosis. For example, if one wishes to learn a classifier for diagnosing an illness with $80\%$ confidence, and recommend further medical tests if it is not possible, the ideal classifier would be $h^*_{0.2}$, which is the minimizer of the abstain(0.2) loss.  If $\alpha=\half$, the Bayes classifier $h^*_\alpha$ has a very appealing structure -- a class $y\in[n]$ is predicted only if the class $y$ has a simple majority. The abstain($\alpha$) loss is also useful in applications where a `greater than $1-\alpha$ conditional probability detector' can be used as a black box. For example a greater than $\half$ conditional probability detector plays a crucial role in hierarchical classification \cite{Ramaswamy+15}. (Details in supplementary material.) 

Abstain($\alpha$) loss with $\alpha=\half$  will be the main focus of our paper and will be the default when the abstain loss is referred to without any reference to $\alpha$. (As will be the case in Sections \ref{sec:crammer-singer-OVA}, \ref{sec:BEP}, \ref{sec:algo} and \ref{sec:expts}.) 

As it can be seen that the Bayes classifier $h^*_\alpha$ depends only on the conditional distribution of $Y|X$, any algorithm that gives a consistent estimator of the conditional probability of the classes, e.g. minimizing the one vs all squared loss, \cite{RamaswamyAg12,Vernet+11}, can be made into a consistent algorithm (with a suitable change in the decision) for this problem. 


However smooth surrogates that estimate the conditional probability do much more than what is necessary to solve this problem. Consistent piecewise linear surrogate minimizing algorithms, on the other hand do only what is needed and can be expected to be more successful. For example, least squares classification, logistic regression and SVM are all consistent for standard binary classification, but the SVM (which minimizes a piecewise linear hinge loss surrogate) is arguably the most widely used method. Piecewise linear surrogates have other advantages like easier optimization and sparsity (in the dual) as well, hence finding consistent piecewise linear surrogates for the abstain loss is an important and interesting task.

We show that the $n$-dimensional multiclass surrogate of Crammer and Singer \cite{CrammerSi01} and the simple one vs all hinge surrogate loss \cite{Rifkin04} both yield a consistent algorithm for the abstain$\left(\half\right)$ loss. It is interesting to note that both these surrogates are \emph{not} consistent for the standard multiclass classification problem \cite{TewariBa07,Lee+04, Zhang04b}. 

More interestingly, we construct a new convex piecewise linear surrogate, which we call the \emph{binary encoded predictions} (BEP) surrogate that operates on a $\log_2(n)$ dimensional space, and yields a consistent algorithm for the $n$-class abstain$\left(\half\right)$ loss. When optimized over comparable function classes, this algorithm is more efficient than the Crammer-Singer and one vs all algorithms due to requiring to only find $\log_2(n)$ functions over the instance space, as opposed to $n$ functions. This result is surprising because, it has been shown that one needs to minimize at least a $n-1$ dimensional convex surrogate to get a consistent algorithm for the standard $n$-class problem, i.e. \emph{without} the reject option \cite{RamaswamyAg12}. Also the only known generic way of generating consistent surrogate minimizing algorithms for a given loss matrix \cite{RamaswamyAg12}, when applied to the $n$-class abstain loss would give a $n$-dimensional surrogate here.


It is important to note the role of $\alpha$ -- the cost of abstaining. While conditional probability estimation based surrogates can be used for designing consistent algorithms for the $n$-class problem with the reject option with any $\alpha\in(0,\frac{n-1}{n})$, the Crammer-Singer surrogate, the one vs all hinge and the BEP surrogate and their corresponding variants all yield consistent algorithms only for $\alpha\in[0,\half]$. While this may seem restrictive, we contend that  these form an interesting and useful set of problems to solve.
We also suspect that, abstain($\alpha$) problems with $\alpha>\half$ are fundamentally more difficult than those with $\alpha \leq \half$, for the reason that evaluating the Bayes classifier $h^*_\alpha(x)$ can be done for $\alpha \leq \half$ without finding the maximum conditional probability -- just check if any class has conditional probability greater than $(1-\alpha)$ as there can only be one. This is also evidenced by the more complicated partitions of the simplex induced by the Bayes optimal classifier for $\alpha>\half$ as shown in Figure \ref{fig:abstain}.
\vspace{-0.1em}

\subsection{Overview}
\vspace{-0.1em}
We start with some preliminaries and notation in Section \ref{sec:prelims}. In Section \ref{sec:crammer-singer-OVA}  we give excess risk bounds relating the excess Crammer-Singer multiclass surrogate risk and one vs all hinge surrogate risk to the excess abstain$\left(\half\right)$ risk. In Section \ref{sec:BEP} we give our $\log_2(n)$ dimensional BEP surrogate, and give similar excess risk bounds. In Section \ref{sec:algo}, we frame the learning problem with the BEP surrogate as an optimization problem, derive its dual and give a block co-ordinate descent style  algorithm for solving it. In Section \ref{sec:extension} we give generalizations of the Crammer-Singer, one vs all hinge and BEP surrogates that are consistent for abstain$(\alpha)$ loss for $\alpha\in\left[0,\half\right]$. In Section \ref{sec:expts} we include experimental results for all three algorithms. We conclude in Section \ref{sec:concl} with a summary.
\vspace{-0.1em}
\section{Preliminaries}
\label{sec:prelims}
\vspace{-0.1em}
Let the instance space be  $\X$, the finite set of class labels be $\Y=[n]=\{1,\ldots,n\}$, and the finite set of target labels be given by $\T = [n+1] = \{1,\ldots,n+1\}$. Given training examples $(X_1,Y_1),\ldots,(X_m,Y_m)$ drawn i.i.d.\ from a distribution $D$ on $\X\times\Y$, the goal is to learn a prediction model $h:\X\>\T$.   

\begin{figure*}[t]
\vspace{-1em}
\begin{picture}(200,100)
\put(10,20){${\underset{\textrm{\rule{0pt}{11pt}\normalsize{(a)}}}{\left[ \begin{array}{cccc}
 	0 & 1 & 1 & \alpha \\
    1 & 0 & 1 & \alpha \\
    1 & 1 & 0 & \alpha \\
\end{array} \right]}}$}

\put(120,3){${\underset{\textrm{\rule{0pt}{11pt}\normalsize{(b)}}}{\includegraphics[width=100\unitlength]{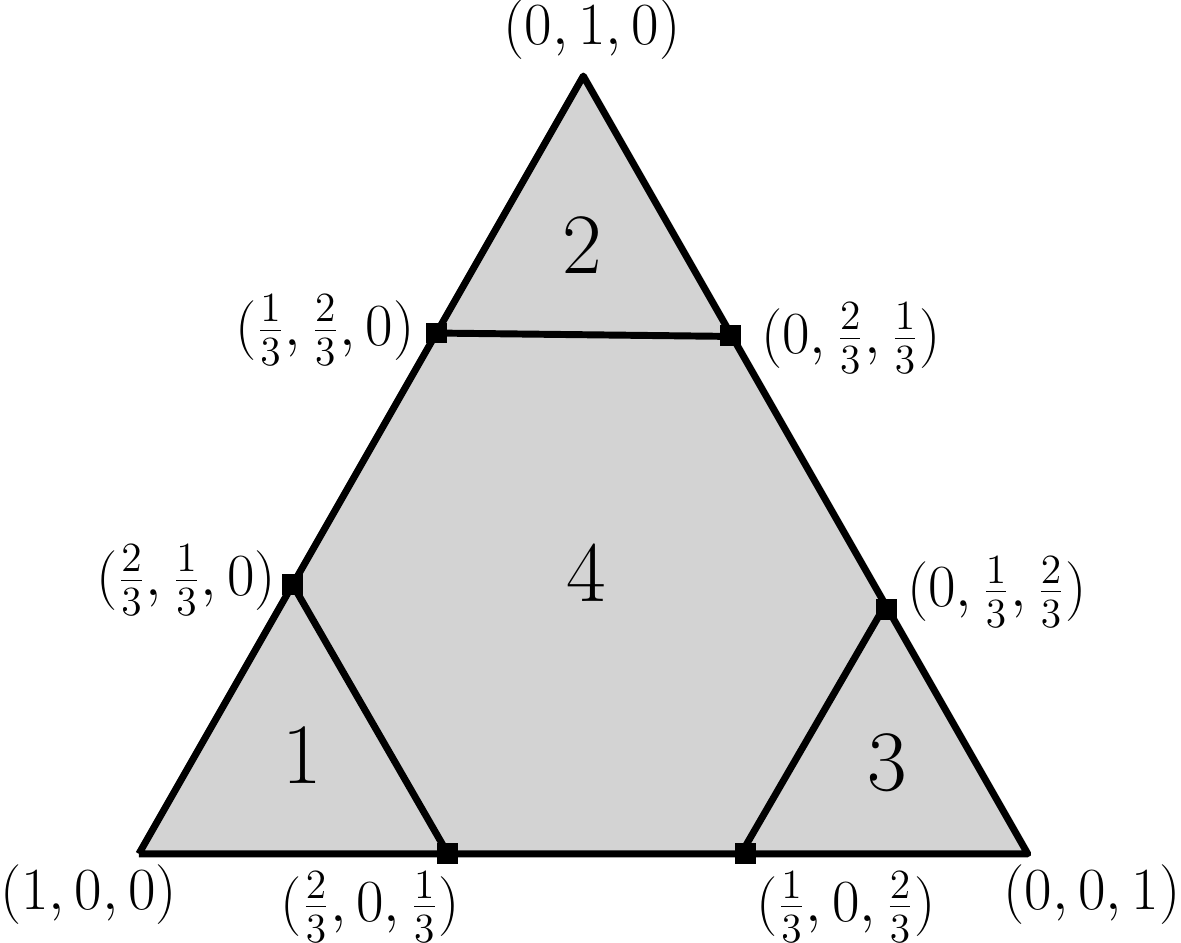}}}$}
\put(250,3){${\underset{\textrm{\rule{0pt}{11pt}\normalsize{(c)}}}{\includegraphics[width=100\unitlength]{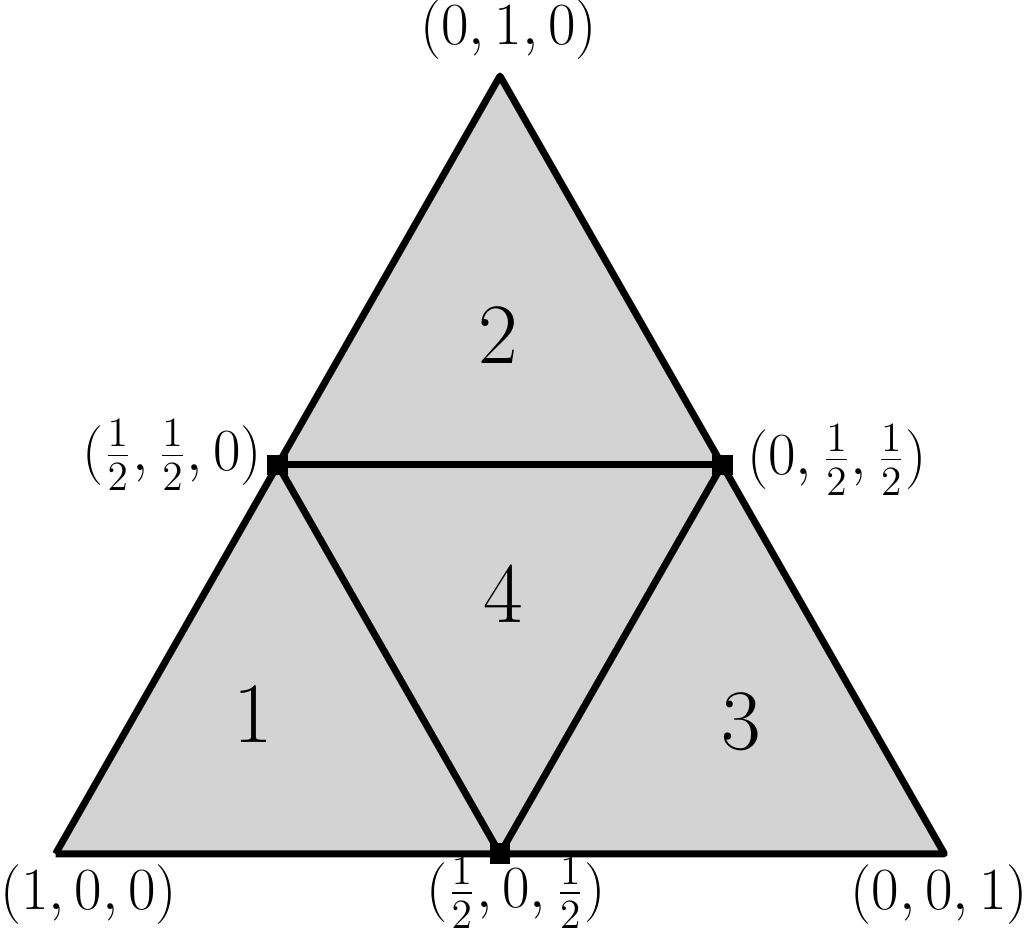}}}$}
\put(380,3){${\underset{\textrm{\rule{0pt}{11pt}\normalsize{(d)}}}{\includegraphics[width=100\unitlength]{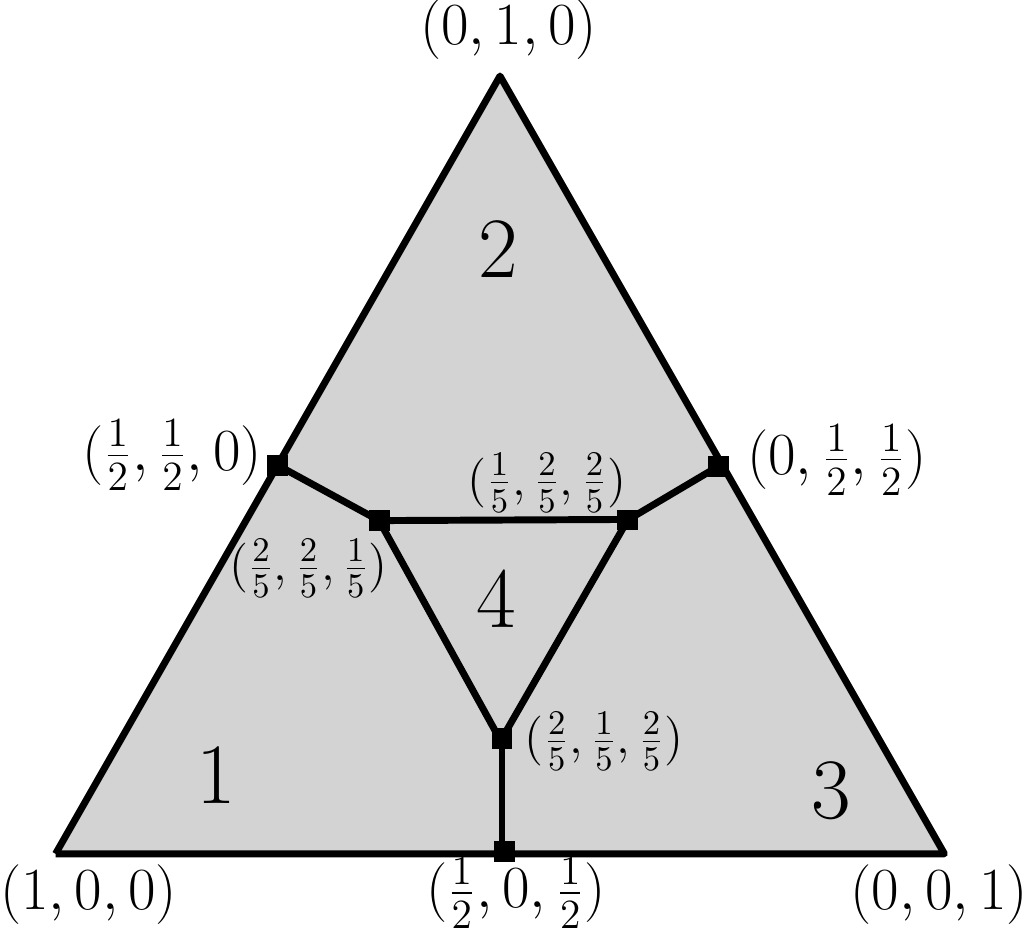}}}$}
\end{picture}
\caption{(a) The abstain$(\alpha)$ loss matrix \mbox{(with n = 3)}; (b,c,d) the partition of the simplex $\Delta_ 3$, depicting the optimal prediction for different conditional probabilities, induced by the Bayes classifier for the abstain($\frac{1}{3}$), abstain($\frac{1}{2}$) and abstain($\frac{3}{5}$) losses respectively.}
\vspace{-1em}
\label{fig:abstain}
\end{figure*}

For any given $\alpha\in[0,1]$, the performance of a prediction model $h:\X\>\T$ is measured via the abstain$(\alpha)$ loss $\ell^\alpha: \Y \times\T\>\R_+$ from Equation (\ref{eqn:abstain-loss}). $\ell^\alpha(y,t)$ denotes the loss incurred on predicting $t$ when the truth is $y$. We will find it convenient to represent the loss function $\ell^\alpha:\Y\times\T\>\R_+$ as a loss matrix $\L^\alpha\in\R_+^{n\times (n+1)}$ with elements $L^\alpha_{yt} = \ell^\alpha(y,t)$ for $y\in[n],t\in[n+1]$, and column vectors $\bell^\alpha_t = (L^\alpha_{1t},\ldots,L^\alpha_{nt})^\top \in\R_+^n$ for $t\in[n+1]$. The abstain($\alpha)$ loss matrix and a schematic representation of the Bayes classifier for various values of $\alpha$ given by equation (\ref{eqn:Bayes-abstain}) are given in Figure \ref{fig:abstain} for $n=3$.

Specifically, the goal is to learn a model \mbox{$h:\X\>\T$} with low expected loss or $\ell^\alpha$-error 
$$\er_D^{\ell^\alpha}[h] = \E_{(X,Y)\sim D}[\ell^\alpha(Y,h(X))]\;.$$ 
Ideally, one wants the $\ell^\alpha$-error of the learned model to be close to the optimal $\ell^\alpha$-error 
$$\er_D^{\ell^\alpha,*} = \inf_{h:\X\>\T}\er_D^{\ell^\alpha}[h]\;.$$

An algorithm, which outputs a (random) model \mbox{$h_m:\X\>\T$} on being given a random training sample as above, is said to be \emph{consistent} w.r.t.\ $\ell^\alpha$ if the \mbox{$\ell^\alpha$-error} of the learned model $h_m$ converges in probability to the optimal for all distributions $D$: $\er_D^{\ell^\alpha}[h_m] \,\ip\, \er_D^{\ell,*}$ . 
Here the convergence in probability is over the learned classifier $h_m$ as a function of the training sample distributed i.i.d. according to $D$.

However, minimizing the discrete $\ell^\alpha$-error directly is computationally difficult; therefore one uses instead a \emph{surrogate loss function} $\psi:\Y\times\R^d\>\overline{\R}_+$ (where $\overline{\R}_+ = [0,\infty]$), for some $d\in\Z_+$, and learns a model $\f:\X\>\R^d$ by minimizing (approximately, based on the training sample) the $\psi$-error 
$$\er_D^\psi[\f] = \E_{(X,Y)\sim D}[\psi(Y,\f(X))]\;.$$ 
Predictions on new instances $x\in\X$ are then made by applying the learned model $\f$ and mapping back to predictions in the target space $\T$ via some mapping $\pred:\R^d\>\T$, giving $h(x) = \pred(\f(x))$.

Under suitable conditions, algorithms that approximately minimize the $\psi$-error based on a training sample are known to be consistent with respect to $\psi$, i.e.\ to converge in probability to the optimal $\psi$-error 
$$\er_D^{\psi,*} = \inf_{\f:\X\>\R^d}\er_D^\psi[\f]\;.$$
Also, when $\psi$ is convex in its second argument, the resulting optimization problem is convex and can be efficiently solved.

Hence, we seek a surrogate and a predictor $(\psi,\pred)$, with $\psi$ convex over its second argument, and satisfying a bound of the following form holding for all $\f:\X\>\R^d$
$$\er_D^{\ell^\alpha}[\pred\circ \f] -\er_D^{\ell^\alpha,*} \leq \xi \left( \er_D^\psi[\f] - \er_D^{\psi,*} \right)$$ 
where $\xi:\R\>\R$ is increasing, continuous at $0$ and $\xi(0)=0$. A surrogate and a predictor $(\psi,\pred)$, satisfying such a bound, known as an excess risk transform bound, would immediately give an algorithm consistent w.r.t. $\ell^\alpha$ from an algorithm consistent w.r.t. $\psi$. We derive such bounds w.r.t. the $\ell^\half$ loss for the Crammer-Singer surrogate, the one vs all hinge surrogate, and the BEP surrogate, with $\xi$ as a linear function.
\vspace{-0.1em}
\section{Excess Risk Bounds for the Crammer-Singer and One vs All Hinge Surrogates}
\label{sec:crammer-singer-OVA}
\vspace{-0.1em}
In this section we give an excess risk bound relating the abstain loss $\ell$, and the Crammer-Singer surrogate  $\psi^\CS$ \cite{CrammerSi01} and also the one vs all Hinge loss.

Define the surrogate $\psi^\CS:[n]\times\R^n\>\R_+$ and predictor $\pred^\CS_\tau:\R^n\>[n+1]$ as
\begin{eqnarray*}
 \psi^\CS(y,\u) &=& (\max_{j\neq y} u_j -u_y + 1)_+ \\
 \pred^\CS_\tau(\u)&=&\begin{cases}
                    \argmax_{i\in[n]} u_i &\text{ if } u_{(1)} - u_{(2)} > \tau \\
                    n+1				&\text{otherwise}
                   \end{cases}
\end{eqnarray*}
where $(a)_+=\max(a,0)$, $u_{(i)}$ is the $i$ th element of the components of $\u$ when sorted in descending order and $\tau\in(0,1)$ is a threshold parameter. 

We proceed further and also define the surrogate and predictor for the one vs all hinge loss.
The surrogate $\psi^\OVA:[n]\times\R^n\>\R_+$ and predictor $\pred^\OVA_\tau:\R^n\>[n+1]$ are defined as
\[
 \psi^\OVA(y,\u) = \sum_{i=1}^n  \1(y=i)(1-  u_i)_+ + \1(y\neq i) (1 + u_i)_+ 
\]
\[
 \pred^\OVA_\tau(\u)=\begin{cases}
                    \argmax_{i\in[n]} u_i &\text{ if } \max_j u_j >  \tau \\
                    n+1				&\text{otherwise}
                   \end{cases}
 \]
where $(a)_+=\max(a,0)$ and $\tau\in(-1,1)$ is a threshold parameter, and ties are broken arbitrarily, say, in favor of the label $y$ with the smaller index. 

The following is the main result of this section, the proof of which is in Appendix \ref{sec:app-A} and \ref{sec:app-B}.
\begin{thm}
\label{thm:CS-OVA-abstain-excess-risk}
Let $n\in\N$ , $\tau_\CS\in(0,1)$ and $\tau_\OVA\in(-1,1)$. Then for all $\f:\X\>\R^n$ 
\vspace{-1em}
$$ $$ 
\begin{eqnarray*} 
\er_D^{\ell }[\pred^\CS_{\tau_\CS} \circ \f] - \er_D^{\ell ,*} 
&\leq&  
\frac{\left(\er_D^{\psi^\CS}[\f] - \er_D^{\psi^\CS,*}\right)}{2 \min(\tau_\CS,1-\tau_\CS)} \\
\er_D^{\ell }[\pred^\OVA_{\tau_\OVA} \circ \f] - \er_D^{\ell ,*}  
&\leq&  
\frac{\left(\er_D^{\psi^\OVA}[\f] - \er_D^{\psi^\OVA,*}\right)}{2 (1-|\tau_\OVA|)}
\end{eqnarray*}
\end{thm}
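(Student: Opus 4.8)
The plan is the standard pointwise reduction followed by a case analysis driven by the behaviour of the predictor. For a conditional distribution $\p=(p_1,\dots,p_n)\in\Delta_n$ (playing the role of $(p_x(1),\dots,p_x(n))$), a target $t\in[n+1]$ and a score vector $\u\in\R^n$, write $\ell(\p,t)=\sum_y p_y\,\ell(y,t)$, $\psi^\CS(\p,\u)=\sum_y p_y\,\psi^\CS(y,\u)$, and likewise $\psi^\OVA(\p,\u)$. Since $\er_D^{\ell,*}=\E_X[\min_{t}\ell(p_X,t)]$, $\er_D^{\psi,*}=\E_X[\inf_{\u}\psi(p_X,\u)]$, and every other risk in the statement is an expectation over $X$ of the corresponding conditional quantity, it suffices to prove, for every $\p\in\Delta_n$ and every $\u\in\R^n$,
$$\ell(\p,\pred^\CS_{\tau_\CS}(\u))-\min_{t}\ell(\p,t)\ \le\ \frac{\psi^\CS(\p,\u)-\inf_{\u'}\psi^\CS(\p,\u')}{2\min(\tau_\CS,1-\tau_\CS)},$$
together with the analogous inequality for $\psi^\OVA$; applying $\E_X$ then yields the theorem. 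Note that $\ell(\p,t)=1-p_t$ for $t\in[n]$ and $\ell(\p,n+1)=\half$, so $\min_t\ell(\p,t)=\min(1-p_{(1)},\half)$.

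Next I would pin down the conditional Bayes surrogate risks. For Crammer--Singer I would show $\inf_{\u'}\psi^\CS(\p,\u')=\min\!\big(1,\,2(1-p_{(1)})\big)$, the infimum being approached (not attained) by vectors assigning a large value to a class of maximal probability and leaving the remaining coordinates tied. For one-vs-all the surrogate decouples, $\psi^\OVA(\p,\u)=\sum_i\big(p_i(1-u_i)_+ + (1-p_i)(1+u_i)_+\big)$, and each summand is a convex piecewise-linear function of $u_i$ minimised at $u_i\in\{-1,+1\}$ with value $2\min(p_i,1-p_i)$; hence $\inf_{\u'}\psi^\OVA(\p,\u')=2\sum_i\min(p_i,1-p_i)=\min\!\big(2,\,4(1-p_{(1)})\big)$.

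The core work is a case analysis for fixed $\p,\u$, splitting on: (i) whether $\pred^\CS_{\tau_\CS}$ abstains ($u_{(1)}-u_{(2)}\le\tau_\CS$) or predicts the class $i^*=\argmax_i u_i$ ($u_{(1)}-u_{(2)}>\tau_\CS$); (ii) whether $i^*$ is a maximiser of $\p$; and (iii) whether $p_{(1)}\le\half$. In each branch the left-hand side is an explicit function of $\p$ — it is $(p_{(1)}-\half)_+$ when the predictor abstains, and $(\half-p_{i^*})_+$ or $p_{(1)}-p_{i^*}$ when it predicts $i^*$ — and one must lower bound $\psi^\CS(\p,\u)-\inf_{\u'}\psi^\CS(\p,\u')$ accordingly. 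The two structural facts that make this go through are: if $u_{(1)}-u_{(2)}\le\tau_\CS$ then $\psi^\CS(y,\u)\ge 1-(u_{(1)}-u_{(2)})\ge 1-\tau_\CS$ for a top-$\u$ class $y$, while \emph{simultaneously} $\psi^\CS(y,\u)\ge 1+(u_{(1)}-u_{(2)})$ for every other class; and if $u_{(1)}-u_{(2)}>\tau_\CS$ then $\psi^\CS(y,\u)\ge 1+(u_{(1)}-u_{(2)})>1+\tau_\CS$ for every $y\ne i^*$, in particular for a true maximiser when the prediction is wrong. Thus a needed-but-withheld confident prediction, or a wrongly-placed one, is charged on essentially all the probability mass. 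Substituting these bounds together with the value of $\inf_{\u'}\psi^\CS(\p,\u')$, the worst case over $\u$ reduces to a one-parameter optimisation in $\delta=u_{(1)}-u_{(2)}$, after which every branch collapses to an elementary comparison such as $\min(\tau_\CS,1-\tau_\CS)\le 1-\tau_\CS$ or $\min(\tau_\CS,1-\tau_\CS)\le\tau_\CS$, both trivially true. The one-vs-all bound is proved identically using the decoupled form: abstaining forces every $u_i\le\tau_\OVA<1$, so the coordinate of a top class contributes at least $p_i(1-\tau_\OVA)$ beyond its unconstrained minimum while the other coordinates contribute at least their minima; confidently predicting forces $u_{i^*}>\tau_\OVA$ with a misranked true maximiser scoring low; and the resulting scalar optimisation collapses to $1-|\tau_\OVA|\le 1-\tau_\OVA$, i.e.\ $\tau_\OVA\le|\tau_\OVA|$.

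The step I expect to be the real obstacle is the bookkeeping in this case analysis: the naive bound ``$\psi\ge0$ on the wrong labels'' is too weak near the tight instances, so one must consistently exploit that under-separating or mis-ordering the scores penalises several labels at once, and then verify that the adversary's freedom in choosing $\u$ within the region that dictates a given prediction still cannot beat the stated constant. Getting $\inf_{\u'}\psi^\CS(\p,\u')$ exactly right (and being careful that it is an infimum, not a minimum) also requires attention, since a loose value there would propagate into every branch.
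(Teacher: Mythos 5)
Your plan is essentially the paper's own proof: the same pointwise conditional reduction, the same explicit candidate points ($\e_y$ and $\0$ for CS, $2\e_y-\1$ and $-\1$ for OVA) used to upper-bound the conditional Bayes surrogate risk, the same key inequalities $\psi^\CS(y,\u)\ge 1-(u_{(1)}-u_{(2)})$ or $\ge 1+(u_{(1)}-u_{(2)})$ according to whether $y$ maximises $\u$, and the same case split on the predictor's output crossed with whether some class has $p_y\ge\half$. The only quibble is a constant in your OVA sketch: when the predictor abstains and $p_i\ge\half$, the $i$-th coordinate exceeds its unconstrained minimum by $(2p_i-1)(1-\tau_\OVA)$, not $p_i(1-\tau_\OVA)$, which is still exactly enough for the stated bound.
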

\vspace{-0.4em}

\textbf{Remark:} It has been pointed out previously by Zhang \yrcite{Zhang04b}, that if the data distribution $D$ is such that $\max_y p_x(y) > 0.5$ for all $x\in\X$, the Crammer-Singer surrogate $\psi^\CS$ and the one vs all hinge loss are consistent with the zero-one loss when used with the standard argmax predictor. Our Theorem \ref{thm:CS-OVA-abstain-excess-risk} implies the above observation. However it also gives more -- in the case that the distribution does not satisfy the dominant class assumption, the model learned by using the surrogate and predictor $(\psi^\CS,\pred^\CS_\tau)$ or $(\psi^\OVA,\pred^\OVA_\tau)$ asymptotically still gives the right answer for instances having a dominant class, and fails in a graceful manner by abstaining for instances that do not have a dominant class.

\section{Excess Risk Bounds for the BEP Surrogate}
\label{sec:BEP}
\vspace{-0.1em}
The Crammer-Singer surrogate and the one vs all hinge surrogate, just like surrogates designed for conditional probability estimation, are defined over an $n$-dimensional domain. Thus any algorithm that minimizes these surrogates must learn $n$ real valued functions over the instance space. In this section, we construct a $\lceil\log_2(n)\rceil$ dimensional convex surrogate, which we call as the \emph{binary encoded predictions} (BEP) surrogate and give an excess risk bound relating this surrogate and the abstain loss. In particular these results show that the BEP surrogate is calibrated w.r.t. the abstain loss; this in turn implies that the \emph{convex calibration dimension} (CC-dimension) \cite{RamaswamyAg12} of the abstain loss is at most $\lceil\log_2(n)\rceil$.

For the purpose of simplicity let us assume $n=2^d$ for some positive integer $d$.\footnote{If $n$ is not a power of $2$, just add enough dummy classes that never occur.} Let $B:[n]\>\{+1,-1\}^d$ be any one-one and onto mapping, with an inverse mapping $B^{-1}:\{+1,-1\}^d\>[n]$. Define the BEP surrogate $\psi^\BEP:[n]\times\R^d\>\R_+$ and its corresponding predictor $\pred^\BEP_\tau:\R^d\>[n+1]$ as\\

\vspace{-1.8em}

\begin{minipage}{0.45 \textwidth}
\begin{eqnarray*}
 \psi^\BEP(y,\u)= (\max_{j\in[d]} B_j(y)u_j + 1)_+
\end{eqnarray*}
\end{minipage}
\begin{minipage}{0.02\textwidth}
 
\end{minipage}
\begin{minipage}[b]{0.53\textwidth}
\begin{equation*}
\pred^\BEP_\tau(\u)=\begin{cases}
                 n+1  & \text{if } \min_{i\in[d]} |u_i| \leq \tau \\
                 B^{-1}(\text{sign}(-\u))  & \text{Otherwise}
                \end{cases}
\end{equation*}
\end{minipage}

where $\sign(u)$ is the sign of $u$, with $\sign(0)=1$ and $\tau\in(0,1)$ is a threshold parameter.

Define the sets $\U^\tau_1,\ldots,\U^\tau_{n+1}$, where
$\U^\tau_k=\{\u\in\R^d:\pred^\BEP_\tau(\u)=k\}$. Which evaluates to
\begin{eqnarray*}
 \U^\tau_y &=& \{\u\in\R^d: \max_j B_j(y) u_j < -\tau \} ~~~ \text{ for }y\in[n]\\ [-0.2em] 
 \U^\tau_{n+1} &=& \{\u\in\R^d: \min_j |u_j| \leq  \tau \} 
\end{eqnarray*}
%
To make the above definition clear we will see how the surrogate and predictor look like for the case of $n=4$ and $\tau=\half$.  We have $d=2$. Let us fix the mapping $B$ such that $B(y)$ is the standard $d$-bit binary representation of $(y-1)$, with $-1$ in the place of $0$. Then we have, 
\begin{eqnarray*}
 \psi^\BEP(1,\u)&=&(\max(-u_1,-u_2) + 1)_+ \\
 \psi^\BEP(2,\u)&=&(\max(-u_1,u_2) + 1)_+\\
 \psi^\BEP(3,\u)&=&(\max(u_1,-u_2) + 1)_+\\
 \psi^\BEP(4,\u)&=&(\max(u_1,u_2) + 1)_+
\end{eqnarray*} 
$$\pred^\BEP_{\half}(\u)=\begin{cases}
                 1 &\text{if }u_1>\half, u_2>\half \\
                 2 &\text{if }u_1>\half, u_2<-\half \\
                 3 &\text{if }u_1<-\half, u_2>\half \\
                 4 &\text{if }u_1<-\half, u_2<-\half \\
                 5 &\text{otherwise}
                \end{cases}$$
\begin{figure}
\begin{center}
 \includegraphics[width=0.4\textwidth]{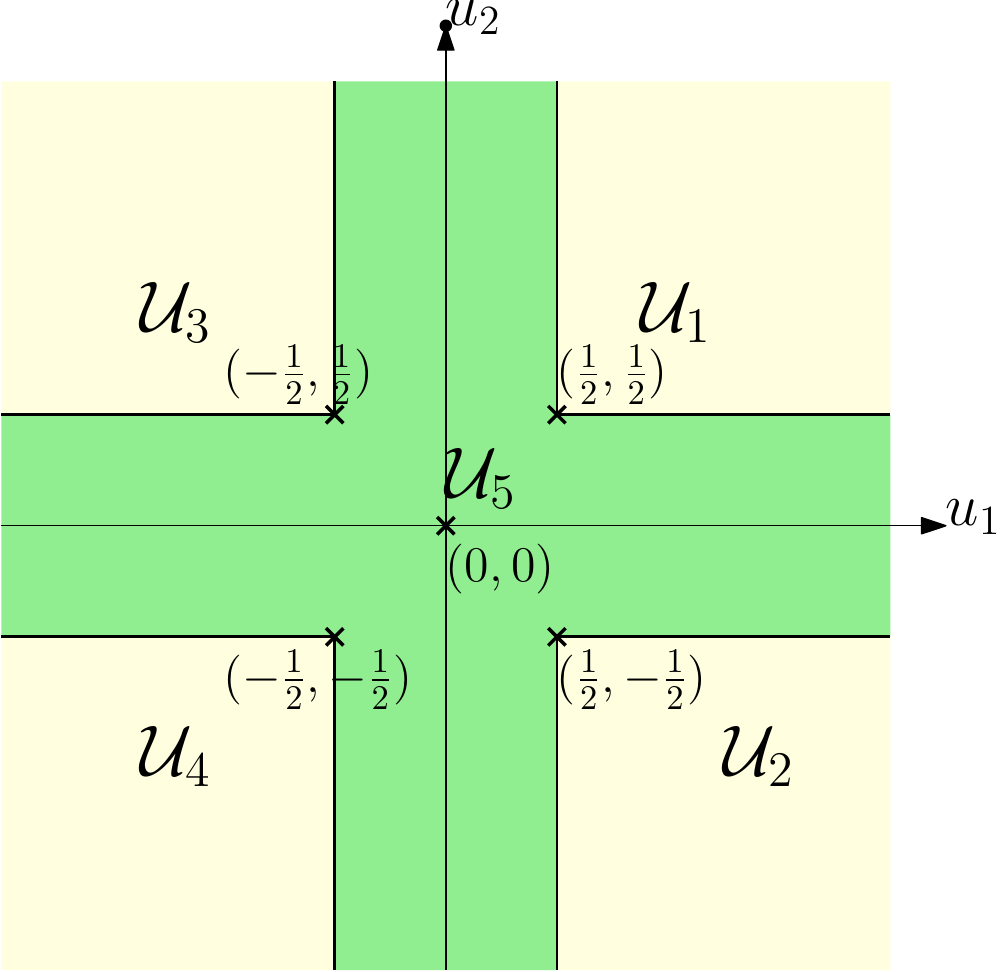}
\end{center}
\caption{The partition of $\R^2$ induced by $\pred^\BEP_\half$}
\label{fig:pred-partition}
\end{figure}
Figure \ref{fig:pred-partition} gives the partition induced by the predictor $\pred^\BEP_\half$.

The following is the main result of this section, the proof of which is in Appendix \ref{sec:app-C}
\begin{thm}
\label{thm:BEP-abstain-excess-risk}
Let $n\in\N$ and $\tau\in(0,1)$. Let $n=2^d$. Then for all $f:\X\>\R^d$
\begin{eqnarray*}
\er_D^{\ell }[\pred^\BEP_\tau \circ \f] - \er_D^{\ell ,*} 
&\leq&  
\frac{\left(\er_D^{\psi^\BEP}[\f] - \er_D^{\psi^\BEP,*}\right) }{2 \min(\tau,1-\tau)}
\end{eqnarray*}
\end{thm}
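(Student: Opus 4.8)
The plan is to follow the standard pointwise recipe for excess risk bounds: reduce the population-level inequality to a pointwise inequality over the conditional distribution $\p = \p_x \in \Delta_n$, and then verify that pointwise inequality by a case analysis. Concretely, write the conditional $\ell$-risk of predicting $t\in[n+1]$ at a point with conditional probability vector $\p$ as $\ell^{1/2}$-regret, and similarly the conditional $\psi^{\BEP}$-risk of a vector $\u\in\R^d$ as $\reg^{\psi}(\p,\u) = \sum_y p_y \psi^{\BEP}(y,\u) - \inf_{\u'}\sum_y p_y\psi^{\BEP}(y,\u')$. Since both $\er_D^{\ell}[\pred\circ\f] - \er_D^{\ell,*}$ and $\er_D^{\psi}[\f]-\er_D^{\psi,*}$ are expectations over $X$ of these pointwise regrets (using that the infimum of the $\psi$-risk can be taken pointwise because $\f$ ranges over all measurable functions), it suffices to show, for every $\p\in\Delta_n$ and every $\u\in\R^d$,
\[
\reg^{\ell}\big(\p,\pred^{\BEP}_\tau(\u)\big) \;\leq\; \frac{1}{2\min(\tau,1-\tau)}\,\reg^{\psi^{\BEP}}(\p,\u).
\]

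The first step is to compute the conditional Bayes $\ell$-risk and the regret of each action. For the abstain$(1/2)$ loss, predicting $t\in[n]$ costs $1 - p_t$, predicting $n+1$ costs $1/2$, so the Bayes risk is $\min(1-\max_y p_y,\,1/2)$; predicting $t\in[n]$ has regret $\max(0,\;p_{(1)}-p_t) - \max(0,\;p_{(1)}-1/2)$ type expressions — I will simplify to: regret of $t\in[n]$ is $(1-p_t) - \min(1-p_{(1)},1/2)$, and regret of $n+1$ is $1/2 - \min(1-p_{(1)},1/2) = \max(0,\;p_{(1)}-1/2)$. The key qualitative fact (already noted in the paper) is that at most one class can have $p_y>1/2$, and that class, if it exists, is the unique Bayes-optimal prediction. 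The second step is to lower bound $\reg^{\psi^{\BEP}}(\p,\u)$ in terms of the coordinates of $\u$ and how far $\p$ is from putting mass $1/2$ on a single bit-pattern. Here I would use the structure $\psi^{\BEP}(y,\u) = (\max_j B_j(y)u_j+1)_+$: the conditional risk $\sum_y p_y(\max_j B_j(y)u_j+1)_+$ is minimized — I expect — at $\u=\0$ (giving value $1$) unless some $p_y > 1/2$, and when $p_{y^*}>1/2$ the minimizer pushes each coordinate $u_j$ toward $-M B_j(y^*)$ for large $M$, driving $\psi^{\BEP}(y^*,\u)$ to $0$ while the other terms stay bounded; computing/estimating this infimum, and then bounding the suboptimality of a given $\u$, is the crux.

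The main obstacle is exactly this second step: controlling $\reg^{\psi^{\BEP}}(\p,\u)$ from below by a multiple of $\reg^{\ell}(\p,\pred^{\BEP}_\tau(\u))$. I would organize the case analysis by what $\pred^{\BEP}_\tau(\u)$ outputs. If $\pred^{\BEP}_\tau(\u)=n+1$ (i.e. $\min_i|u_i|\le\tau$), then $\reg^{\ell}\le\max(0,p_{(1)}-1/2)$, and I must show the $\psi$-regret is at least $2\min(\tau,1-\tau)$ times that; the point is that if some $p_{y^*}>1/2$, then to make $\sum_y p_y\psi^{\BEP}(y,\u)$ small one needs all $B_j(y^*)u_j$ very negative, but $\min_i|u_i|\le\tau$ forces $B_{j_0}(y^*)u_{j_0} \ge -\tau$ for some coordinate $j_0$, so $\psi^{\BEP}(y^*,\u)\ge(1-\tau)_+$, costing roughly $p_{y^*}(1-\tau)$ against an ideal near $(1-p_{y^*})$-ish — a calculation that should yield the claimed slope after comparing with the true infimum. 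If $\pred^{\BEP}_\tau(\u)=y$ for some $y\in[n]$ (i.e. $\max_j B_j(y)u_j < -\tau$, so $B(y)=\sign(-\u)$ and all $|u_j|>\tau$), then either $y$ is already Bayes-optimal and $\reg^{\ell}=0$ (nothing to prove), or $p_y\le 1/2$ while possibly some other class $y'$ has larger probability; in that sub-case $B_{j_1}(y')u_{j_1}>\tau$ for at least one bit $j_1$ where $B(y')$ and $B(y)$ differ, forcing $\psi^{\BEP}(y',\u)\ge(1+\tau)_+$, and I again trade this lower bound on one term of the $\psi$-risk against the infimum to recover the factor $2\min(\tau,1-\tau)$. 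Throughout, the comparison "given $\u$, versus the exact pointwise infimum of the $\psi$-risk" is the delicate part, since $\psi^{\BEP}$ couples all $d$ coordinates through a single max; I would handle it by exhibiting an explicit near-optimal competitor $\u'$ (either $\0$ or a large multiple of $-B(y^*)$) and bounding $\sum_y p_y\psi^{\BEP}(y,\u) - \sum_y p_y\psi^{\BEP}(y,\u')$ from below, which reduces everything to scalar inequalities in the coordinates of $\u$ and the entries of $\p$.
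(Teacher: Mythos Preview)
Your overall architecture matches the paper's: reduce to a pointwise inequality over $\p\in\Delta_n$ and $\u\in\R^d$, split into cases according to $\pred^\BEP_\tau(\u)$ and whether some $p_y\ge\half$, and compare $\p^\top\bpsi^\BEP(\u)$ against an explicit competitor $\u'$. Two concrete points in your sketch would not go through as written, however.

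First, the competitor ``a large multiple of $-B(y^*)$'' is the wrong choice. At $\u'=-M\,B(y^*)$ with $M\ge 1$ one indeed has $\psi^\BEP(y^*,\u')=0$, but for every $y\ne y^*$ at least one bit of $B(y)$ differs from $B(y^*)$, so $\max_j B_j(y)u'_j\ge M$ and $\psi^\BEP(y,\u')\ge M+1$; hence $\p^\top\bpsi^\BEP(\u')\ge(1-p_{y^*})(M+1)$ \emph{grows} with $M$. The paper takes $\u'=-B(y^*)$ (i.e.\ $M=1$), giving $\p^\top\bpsi^\BEP(-B(y^*))=2(1-p_{y^*})$, which together with $\u'=\0$ (value $1$) are the two competitors actually used.

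Second, and more importantly, your lower bounds on $\p^\top\bpsi^\BEP(\u)$ keep only a single label's contribution: $\psi^\BEP(y^*,\u)\ge 1-\tau$ in the abstain case, or $\psi^\BEP(y',\u)\ge 1+\tau$ for one particular $y'$ in the non-abstain case. This is too weak. For example, when $\pred^\BEP_\tau(\u)=y$ with $p_y\le\half$ and some $p_{y'}>\half$, the single-term estimate gives only $\p^\top\bpsi^\BEP(\u)-2(1-p_{y'})\ge p_{y'}(1+\tau)-2(1-p_{y'})$, which is negative for $p_{y'}$ near $\half$, whereas the $\ell$-regret $p_{y'}-p_y$ is positive. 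The device the paper uses is to set $y_0=B^{-1}(\sign(-\u))$ and $m=\min_j|u_j|$ and observe the dichotomy
\[
\psi^\BEP(y_0,\u)\ \ge\ 1-m,\qquad \psi^\BEP(y,\u)\ \ge\ 1+m\ \text{ for every }y\ne y_0,
\]
which yields the clean aggregate bound $\p^\top\bpsi^\BEP(\u)\ge 1+(1-2p_{y_0})\,m$ involving \emph{all} labels at once. Subtracting the competitor values $1$ or $2(1-p_{y^*})$ and using $m\le\tau$ or $m>\tau$ in the respective cases then produces exactly the factors $2(1-\tau)$ and $2\tau$. Your single-term bounds do not recover this; once you replace them with the two inequalities above (the paper's Lemma~\ref{lem:BEP-identitites}), the rest of your outline goes through essentially verbatim.
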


\textbf{Remark:} The excess risk bounds for the CS, OVA, and BEP surrogates suggest that $\tau=\half$ is the best choice for CS and BEP surrogates, while $\tau=0$ is the best choice for the OVA surrogate. 
However, intuitively $\tau$ is the threshold converting confidence values to predictions, and so it makes sense to use $\tau$ values closer to $0$ (or $-1$ in the case of OVA) to predict aggressively in low-noise situations, and use larger $\tau$ to predict conservatively in noisy situations. Practically, it makes sense to choose the parameter $\tau$ via cross-validation.
\vspace{-0.1em}

\section{BEP Surrogate Optimization Algorithm}
\label{sec:algo}
\vspace{-0.1em}
In this section we frame the problem of finding the linear (vector valued) function that minimizes the BEP surrogate loss over a training set $\{(\x_i,y_i)\}_{i=1}^m$, with $\x_i\in\R^a$ and $y_i\in [n]$, as a convex optimization problem. Once again, for simplicity we assume that the size of the label space $n=2^d$ for some $d\in\Z_+$. The primal and dual of the resulting optimization problem with a norm squared regularizer is given below:

\begin{minipage}[c]{0.485\textwidth}
\textbf{Primal problem:}
\begin{eqnarray*}
 \rule{0ex}{3ex}\min_{\w_1,\ldots,\w_d,\xi_1,\ldots,\xi_m} \sum_{i=1}^m \xi_i + \frac{\lambda}{2} \sum_{j=1}^d ||\w_j||^2 \\[0.4em]
 \text{such that} \hspace{1em} \forall i\in[m],j\in[d] \hspace{4em} \\[0.4em]
 \xi_i \geq B_j(y_i) \w_j^\top \x_i +1 \hspace{3.6em} \\
 \rule{0ex}{3ex}\xi_i \geq 0  \hspace{10em}
\end{eqnarray*}
\end{minipage}

%
\begin{minipage}[c]{0.485\textwidth}
\textbf{Dual problem:}
 \begin{eqnarray*}
\max_{\balpha \in \R^{m\times(d+1)}}   -\sum_{i=1}^m \alpha_{i,0} - \frac{1}{2\lambda} \sum_{i=1}^m \sum_{i'=1}^m \langle \x_i, \x_{i'} \rangle \mu_{i,i'}(\balpha)   \\[0.4em]
\text{such that} \hspace{1em} \forall i\in[m],j\in[d]\cup\{0\} \hspace{7em} \\[0.4em]
\alpha_{i,j} \geq 0  ;\hspace{1.7em}
\sum_{j'=0}^d \alpha_{i,j'} = 1\;. \hspace{6em}   
\end{eqnarray*}
\end{minipage}
where $\mu_{i,i'}(\balpha)=\sum_{j=1}^d B_j(y_i) B_j(y_{i'}) \alpha_{ij}\alpha_{i',j}$. 

\vspace{1em} 
We optimize the dual as it can be easily extended to work with kernels. The structure of the constraints in the dual lends itself easily to a block co-ordinate ascent algorithm, where we optimize over $\{\alpha_{i,j}:j\in\{0,\ldots,d\} \}$ and fix every other variable in each iteration. Such methods have been recently proven to have exponential convergence rate for SVM-type problems \cite{WangLin13}, and we expect results of those type to apply to our problem as well. 

The problem to be solved at every iteration reduces to a $l_2$ projection of a vector $\g^i\in\R^d$ on to the set $\S_i=\{\g\in\R^d:\g^\top \b^i \leq 1\}$, where $\b^i\in\{\pm 1\}^d$ is such that $b^i_j=B_j(y_i)$. The projection problem is a simple variant of projecting a vector on the $l_1$ ball of radius $1$, which can be solved efficiently in $O(d)$ time \cite{Duchi+08}. The vector $\g^i$ is such that for any $j\in[d]$,
\begin{equation*}
 g^i_j=\frac{\lambda}{\langle \x_i,\x_i \rangle} \left(\b^i_j - \frac{1}{\lambda}\left(\sum_{i'=1; i'\neq i}^m \langle \x_i,\x_{i'} \rangle \alpha_{i',j} B_j(y_{i'}) \right) \right).
\end{equation*}

\vspace{-0.1em}
\section{Abstain($\alpha$) Loss for $\alpha<\half$}
\label{sec:extension}
\vspace{-0.1em}
The excess risk bounds derived for the CS, OVA hinge loss and BEP surrogates apply only to the abstain$\left(\half\right)$ loss. But it is possible to derive such excess risk bounds for abstain($\alpha$) with $\alpha\in\left[0,\half \right]$ with slight modifications to the CS, OVA and BEP surrogates.

Define $\psi^{\CS,\alpha}:[n]\times \R^n\>\R_+$, $\psi^{\OVA,\alpha}:[n]\times \R^n\>\R_+$ and $\psi^{\BEP,\alpha}:[n]\times \R^d\>\R_+$, with $n=2^d$ as

\begin{eqnarray*}
 \psi^{\CS,\alpha}(y,\u) &=& 2\cdot\max\bigg(\alpha\max_{j\neq y} \gamma(u_j -u_y), \\[-0.5em]
			  &&(1-\alpha)\max_{j\neq y} \gamma(u_j -u_y)  \bigg)+2\alpha \\
 \psi^{\OVA,\alpha}(y,\u) &=& 2\cdot\bigg(\sum_{i=1}^n\bigg(  \1(y=i)\alpha(1-  u_i)_+ \\[-0.5em]
			  &&\hspace{1em}+~ \1(y\neq i)(1-\alpha) (1 + u_i)_+ \bigg)\bigg)  
\end{eqnarray*}
\begin{eqnarray*}
 \psi^{\BEP,\alpha}(y,\u)&=& 2\cdot\max\bigg(\alpha\max_{j\in[d]} \gamma(B_j(y)u_j), \\[-0.5em]
			  && \hspace{1em}(1-\alpha)\max_{j\in[d]} \gamma(B_j(y)u_j) \bigg) + 2\alpha
 \end{eqnarray*}
where, $\gamma(a) = \max(a,-1)$ and $B:[n]\>\{-1,1\}^d$ is any bijection. Note that $\psi^{\CS,\half}=\psi^{\CS}$, $\psi^{\OVA,\half}=\psi^{\OVA}$ and $\psi^{\BEP,\half}=\psi^{\BEP}$.

One can show the following theorem which is a generalization of Theorems \ref{thm:CS-OVA-abstain-excess-risk} and \ref{thm:BEP-abstain-excess-risk}. The proof proceeds along the same lines as the proofs of Theorems \ref{thm:CS-OVA-abstain-excess-risk} and \ref{thm:BEP-abstain-excess-risk} and is hence omitted.

\begin{thm}
\label{thm:BEP-abstain-alpha-excess-risk}
 Let $n\in\N,\tau\in(0,1),\tau'\in(-1,1)$ and $\alpha\in\left[0,\half\right]$. Let $n=2^d$. Then for all $\f:\X\>\R^d$, $\g:\X\>\R^n$
\begin{eqnarray*}
\lefteqn{\er_D^{\ell^\alpha}[\pred^\CS_\tau \circ \g] - \er_D^{\ell^\alpha,*} }\\
&\leq&  
\frac{1}{2 \min(\tau,1-\tau)}\left(\er_D^{\psi^{\CS,\alpha}}[\g] - \er_D^{\psi^{\CS,\alpha},*}\right) \\
\lefteqn{\er_D^{\ell^\alpha}[\pred^\OVA_{\tau'} \circ \g] - \er_D^{\ell^\alpha,*} }\\
&\leq&  
\frac{1}{2 (1-|\tau'|)}\left(\er_D^{\psi^{\OVA,\alpha}}[\g] - \er_D^{\psi^{\OVA,\alpha},*}\right) \\
\lefteqn{\er_D^{\ell^\alpha}[\pred^\BEP_\tau \circ \f] - \er_D^{\ell^\alpha,*} }\\
&\leq&  
\frac{1}{2 \min(\tau,1-\tau)}\left(\er_D^{\psi^{\BEP,\alpha}}[\f] - \er_D^{\psi^{\BEP,\alpha},*}\right)
\end{eqnarray*}
\end{thm}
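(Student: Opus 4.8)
The plan is to prove all three inequalities by the standard reduction to a pointwise calibration statement on the probability simplex, followed by a case analysis driven by the predictor. I describe the $\BEP$ bound in detail; the $\CS$ and $\OVA$ bounds are analogous and, in the $\OVA$ case, genuinely simpler, since that surrogate decouples across coordinates.

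First I would pass to conditional risks. For $x\in\X$ write $\p=(p_x(1),\dots,p_x(n))\in\Delta_n$; for a surrogate $\psi:\Y\times\R^k\>\R_+$ and a score vector $\u$ put $W_\psi(\p,\u)=\sum_y p_y\,\psi(y,\u)$, and $W_{\ell^\alpha}(\p,t)=\langle\bell^\alpha_t,\p\rangle$, which equals $1-p_t$ for $t\in[n]$ and $\alpha$ for $t=n+1$, so $\min_t W_{\ell^\alpha}(\p,t)=\min(\alpha,\,1-\max_y p_y)$. Since the infimum of $\er^\psi_D[\f]$ over measurable $\f$, and of $\er^{\ell^\alpha}_D[h]$ over measurable $h$, is attained by pointwise minimization, it suffices to prove that for every $\p\in\Delta_n$ and every $\u$,
\[
W_{\ell^\alpha}(\p,\pred_\tau(\u))-\min_t W_{\ell^\alpha}(\p,t)\;\le\;\frac{1}{2\min(\tau,1-\tau)}\Big(W_\psi(\p,\u)-\inf_{\u'}W_\psi(\p,\u')\Big),
\]
with $\min(\tau,1-\tau)$ replaced by $1-|\tau'|$ in the $\OVA$ statement. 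The hypothesis $\alpha\le\tfrac12$ enters crucially: at most one label can satisfy $p_y>1-\alpha$, so the Bayes partition of $\Delta_n$ is the simple one of Figure~\ref{fig:abstain}(c) -- predict the unique such label if it exists, otherwise abstain.

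Next I would rewrite each $\alpha$-surrogate in a predictor-adapted form and identify the optimal conditional surrogate risk. For $\psi^{\BEP,\alpha}$, fix $\u$, let $\s=\sign(\u)\in\{\pm1\}^d$ with entries $s_j$, set $\hat y=B^{-1}(-\s)$ and $r=\min_j|u_j|$. Using $\alpha\le\tfrac12$ together with $\gamma(a)=\max(a,-1)$, one checks $\psi^{\BEP,\alpha}(\hat y,\u)=2\alpha(1-r)_+$, while $\psi^{\BEP,\alpha}(y,\u)=2\alpha+2(1-\alpha)\max_{j:\,B_j(y)=s_j}|u_j|\ge 2\alpha+2(1-\alpha)r$ for every $y\ne\hat y$. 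Summing against $\p$ and minimizing over $\u'$ (which, after fixing the optimal candidate label, reduces to a scalar optimization over the common coordinate magnitude) gives $\inf_{\u'}W_{\psi^{\BEP,\alpha}}(\p,\u')=2\min(\alpha,\,1-\max_y p_y)$ -- exactly twice the Bayes conditional $\ell^\alpha$-risk; the same identity holds for $\psi^{\CS,\alpha}$ and $\psi^{\OVA,\alpha}$ (for $\OVA$ the sum telescopes over coordinates).

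Finally I would run the case analysis on $\u$, splitting on $\pred^{\BEP}_\tau(\u)$ and then on whether $\max_y p_y\le 1-\alpha$. If $r\le\tau$ then $\pred^{\BEP}_\tau(\u)=n+1$, the $\ell^\alpha$-regret is $(\max_y p_y-(1-\alpha))_+$, and the bounds of the previous paragraph lower-bound the surrogate regret by $2(1-\tau)$ times this quantity. If $r>\tau$ then $\pred^{\BEP}_\tau(\u)=\hat y$, the $\ell^\alpha$-regret is $(1-p_{\hat y})-\min(\alpha,1-\max_y p_y)$, and the same bounds (now using $r>\tau$) give the surrogate regret a lower bound with constant $2\tau$. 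Combining the two branches yields the factor $2\min(\tau,1-\tau)$, and the $\CS$ and $\OVA$ arguments have the same shape, with $\pred^{\CS}_\tau$ thresholding the gap $u_{(1)}-u_{(2)}$ and $\pred^{\OVA}_{\tau'}$ thresholding $\max_j u_j$ against $\tau'\in(-1,1)$ (whence the constant $1-|\tau'|=\min(1-\tau',1+\tau')$). The step I expect to be the main obstacle is the sub-case where $r>\tau$ but $\hat y$ is \emph{not} the majority class: then the $\ell^\alpha$-regret is $\max_y p_y-p_{\hat y}$, and one must show the surrogate already ``pays'' for the overconfident wrong prediction by at least $2\min(\tau,1-\tau)(\max_y p_y-p_{\hat y})$. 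This estimate is tight and genuinely uses that \emph{all} off-candidate coordinates satisfy $|u_j|\ge r>\tau$ -- not merely the coordinate where $\hat y$ and the majority label disagree -- so the coupling of the $d$ coordinates through the outer $\max$ in $\psi^{\BEP,\alpha}$ (which $\psi^{\OVA,\alpha}$ lacks) is what makes this sub-case, and the final bookkeeping of constants across all sub-cases, the delicate part.
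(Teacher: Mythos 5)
Your proposal is correct and follows essentially the same route as the paper: the paper omits the proof of Theorem~\ref{thm:BEP-abstain-alpha-excess-risk}, stating only that it "proceeds along the same lines" as the appendix proofs of Theorems~\ref{thm:CS-OVA-abstain-excess-risk} and~\ref{thm:BEP-abstain-excess-risk}, and your reduction to a pointwise conditional-regret inequality followed by a case analysis on the predictor's region and on whether some $p_y$ exceeds $1-\alpha$ is exactly that argument (with the mild cosmetic difference that you compute the conditional Bayes surrogate risk $2\min(\alpha,1-\max_y p_y)$ in closed form rather than comparing against the fixed reference points $-B(y)$ and $\0$ as the appendix does). Your identities for $\psi^{\BEP,\alpha}$ at $\hat y$ and at $y\neq\hat y$, and the resulting constants $2(1-\tau)$ and $2\tau$ in the two branches, all check out.
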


\textbf{Remark:}
When $n=2$, the Crammer-Singer surrogate, the one vs all hinge and the BEP surrogate all reduce to the hinge loss and  $\alpha$ is restricted to be at most $\half$ to ensure the relevance of the abstain option. Applying the above extension for $\alpha\leq\half$ to the hinge loss, we get the `generalized hinge loss' of Bartlett and Wegkamp \yrcite{BarWeg08}.
\vspace{-0.1em}

\section{Experimental Results}
\label{sec:expts}
\vspace{-0.1em}
In this section give our experimental results for the algorithms proposed on both synthetic and real datasets. 

\subsection{Synthetic Data}
We optimize the Crammer-Singer surrogate, the one vs all hinge surrogate and the BEP surrogate, over appropriate kernel spaces  on a 2-dimensional 8 class synthetic data set and show that the the abstain$\left(\half\right)$ loss incurred by the trained model for all three algorithms approaches the Bayes optimal under various thresholds.

The dataset we used was generated as follows. We randomly sample 8 prototype vectors $\v_1,\ldots,\v_8 \in \R^2$, with each $\v_y$ drawn independently from a zero mean unit variance 2D-Gaussian, $\cN(\0,\I_2)$ distribution. These 8 prototype vectors correspond to the 8 classes. Each example $(\x,y)$ is generated by first picking $y$ from one of the 8 classes uniformly at random, and the instance $\x$ is set as $\x=\v_y + 0.65\cdot\u$, where $\u$ is independently drawn from $\cN(\0,\I_2)$. We generated 12800 such $(\x,y)$ pairs for training, and another 10000 instances, for testing.

The CS, OVA, BEP surrogates were all optimized over a reproducing kernel Hilbert Space (RKHS) with a Gaussian kernel   and the standard norm-squared regularizer. The kernel width parameter and the regularization parameter were chosen by grid search using a separate validation set.\footnote{We used Joachims' SVM-light package \cite{Joachims99}  for the OVA and CS algorithms.}

As Figure \ref{fig:expts} indicates, the expected abstain risk incurred by the trained model approaches the Bayes risk with increasing training data for all three algorithms and intermediate $\tau$ values. The excess risk bounds in Theorems \ref{thm:CS-OVA-abstain-excess-risk} and \ref{thm:BEP-abstain-excess-risk} breakdown when the threshold parameter $\tau\in\{0,1\}$ for the CS and BEP surrogates, and when $\tau \in\{-1,1\}$ for the OVA surrogate. This is supported by the observation that, in Figure \ref{fig:expts} the curves corresponding to these thresholds perform poorly. In particular, using $\tau=0$ for the CS and BEP algorithms implies that the resulting algorithms never abstain.

Though all three surrogate minimizing algorithms we consider are consistent w.r.t. abstain loss, we find that the BEP and OVA algorithms use less computation time and samples than the CS algorithm to attain the same error.  However, the BEP surrogate performs  poorly when optimized over a linear function class (experiments not shown here), due to its much restricted representation power. 	
\begin{figure*}[!t]
\begin{center}
 $\underset{\textrm{\rule{0em}{2em}\normalsize(a)}}{\includegraphics[width=0.31\textwidth]{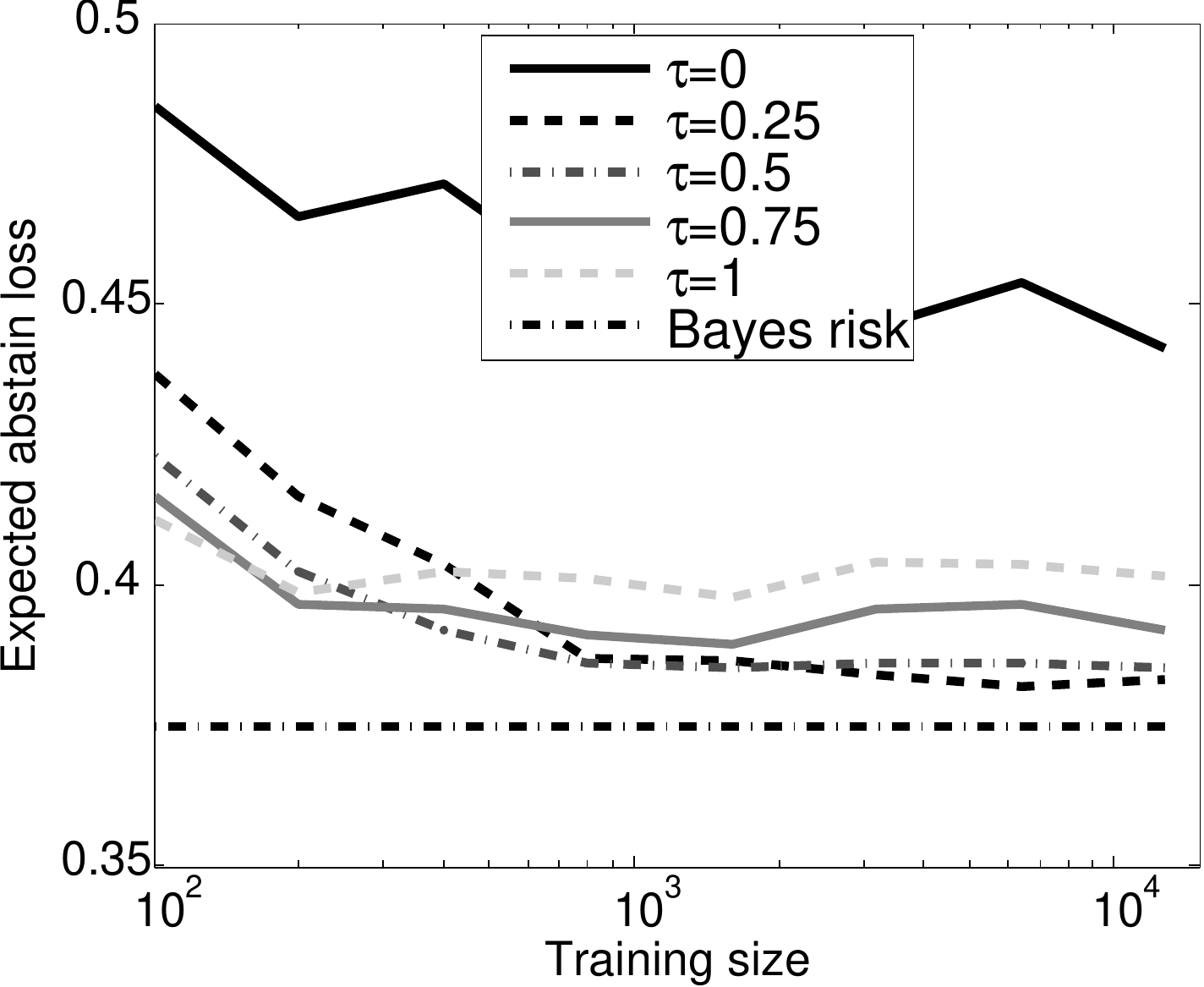}}$
 $\underset{\textrm{\rule{0em}{2em}\normalsize(b)}}{\includegraphics[width=0.31\textwidth]{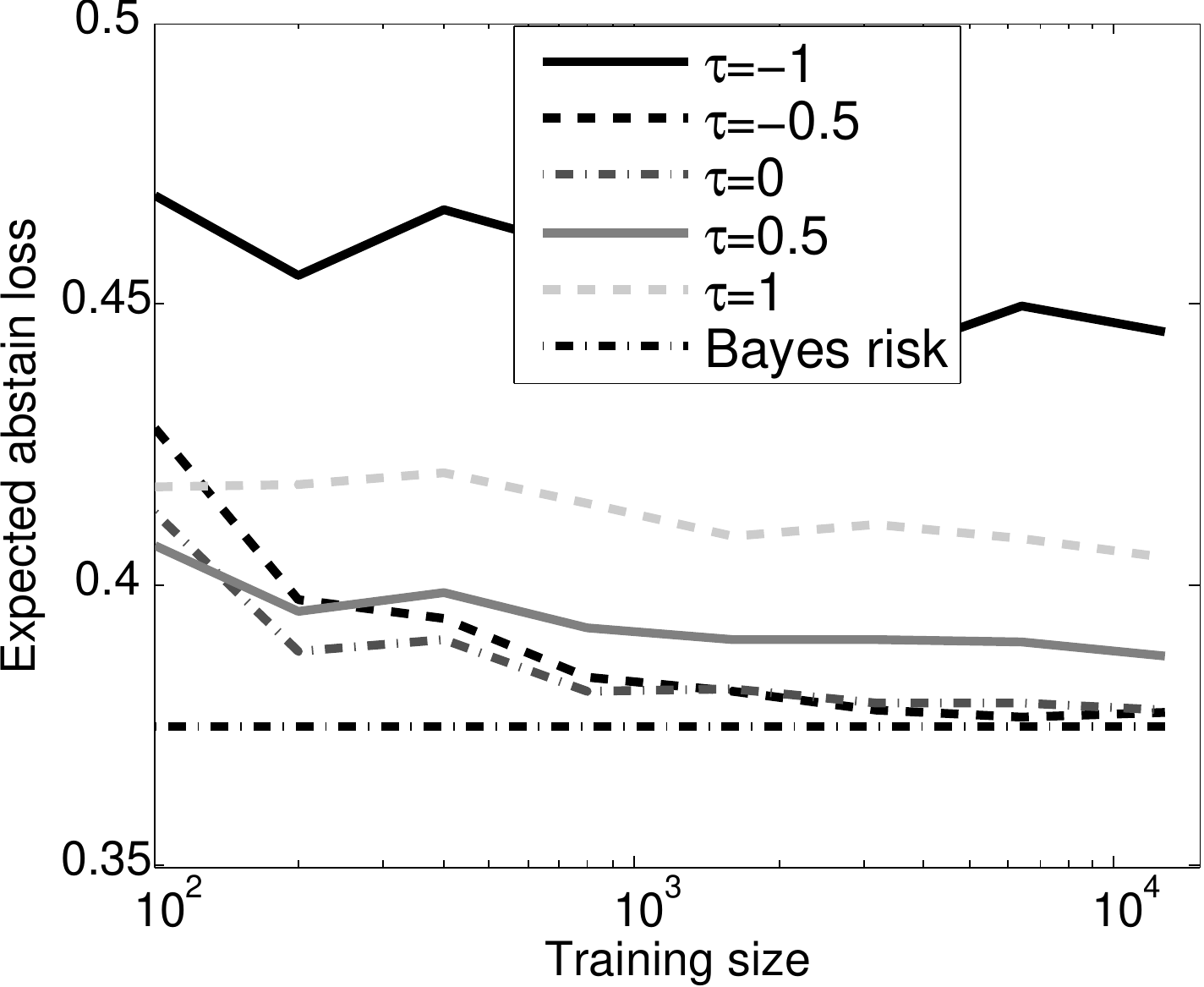}}$
 $\underset{\textrm{\rule{0em}{2em}\normalsize(c)}}{\includegraphics[width=0.31\textwidth]{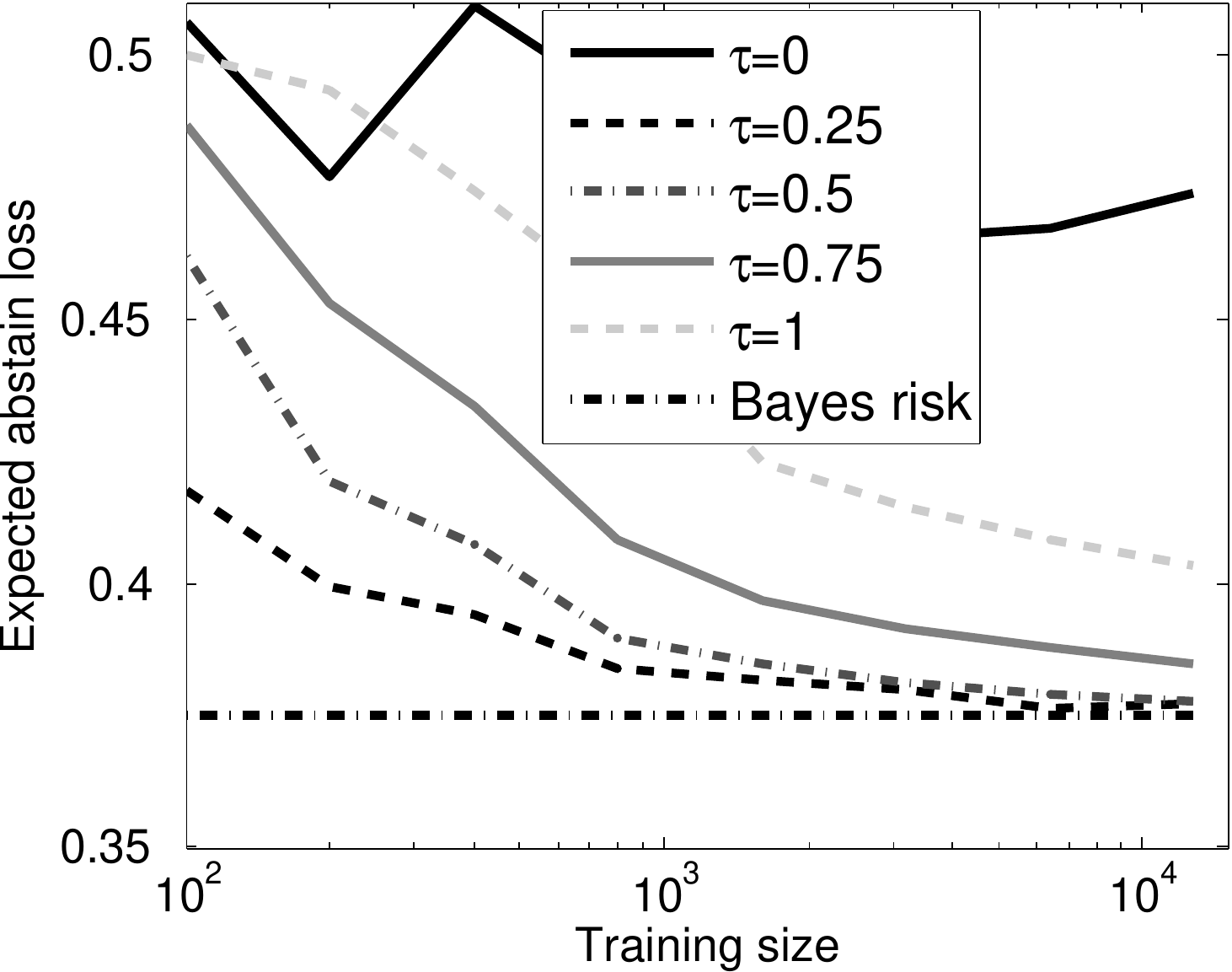}}$
 \end{center}
\vspace{-0.3em}
 \caption{
 (a) Performance of the CS surrogate for various thresholds as a function of training size 
 (b) Performance of the OVA surrogate for various thresholds as a function of training size
 (c) Performance of the BEP surrogate for various thresholds as a function of training size} 

 \label{fig:expts}
\end{figure*}

\subsection{Real Data}

\begin{table*}[!t]
\vspace{-1em}
 \label{tab:results}
 \caption{Error percentages of the three algorithms when the rejection percentage is fixed at 0\%, 20\% and 40\%.}
\begin{center}
 \begin{tabular}{|c||ccc||ccc||ccc|}
  \hline
Reject:			&& 0\% &&& 20\% &&& 40\% &		
\\ \hline \hline
Algorithm:		&CS & OVA & BEP &CS & OVA & BEP &CS & OVA & BEP 
\\ \hline \hline
\texttt{satimage} 	&10.25 & 8.3 & 8.15 
			&5.6   & 2.5 &  2.4
			&2.9   & 0.9 & 0.6
\\ \hline	
  \texttt{yeast}	&44.4 & 38.8 & 42.7
			&34.5 & 26   & 29.7
			&24   & 17   & 19.8
\\ \hline	
  \texttt{letter}	& 4.8 & 2.8 & 4.6
			& 1.4 & 0.1 & 0.6
			& 0.4 & 0   & 0.1
\\ \hline	
  \texttt{vehicle}	& 31.5 & 17.1 & 20.5
			& 24.6 & 8.2 & 13
			& 16.4 & 5.5 & 6.1
\\ \hline	
  \texttt{image}	&5.8 & 5.1 & 4.2
			& 2.2 & 1.6 & 1.6
			& 0.6 & 0.6 & 0.3
\\ \hline	
  \texttt{covertype}	& 32.2 & 28.1 & 29.4
			& 23.6 & 19.3 & 20.4
			& 16.3 & 11.7 & 12.8
\\ \hline
 \end{tabular}
\end{center}
\end{table*}

We ran experiments on real multiclass datasets from the UCI repository, the details of which are in Table \ref{tab:datasets}. In each of these datasets if a train/test split is not indicated in the dataset we make one ourselves by splitting at random. 

\begin{table}[!h]
\caption{Details of datasets used.}
\label{tab:datasets}
 \begin{center}
  \begin{tabular}{|c|c|c|c|c|}
  \hline
   &\# Train & \# Test & \# Feat & \# Class \\ \hline
\texttt{satimage} &4,435	&2,000	&36	&6	\\  
\texttt{yeast} 	&1,000	&484	&8	&10	\\  	
\texttt{letter}	&16,000	&4,000	&16	&26	\\  
\texttt{vehicle}&700	&146	&18	&4	\\  
\texttt{image}	&2,000	&310	&19	&7	\\  
\texttt{covertype} &15,120&565,892&54	&7	\\ \hline  
  \end{tabular}
 \end{center}
\end{table}

All three algorithms (CS, OVA and BEP) were optimized over an RKHS with a Gaussian kernel and the standard norm-squared regularizer. The kernel width and regularization parameters were chosen through validation -- 10-fold cross-validation in the case of \texttt{satimage}, \texttt{yeast}, \texttt{vehicle} and \texttt{image} datasets, and a 75-25 split of the train set into train and validation for the \texttt{letter} and \texttt{covertype} datasets. For simplicity we set $\tau=0$ (or $\tau=-1$ for OVA)  during the validation phase.

The results of the experiment with the CS, OVA and BEP algorithms is given in Table 2. The rejection rate is fixed at  some given level by choosing the threshold $\tau$ for each algorithm and dataset appropriately. As can be seen from the Table, the BEP algorithm's performance is comparable to the OVA, and is better than the CS algorithm. However, Table \ref{tab:times-taken}, which gives the training times for the algorithms, reveals that the BEP algorithm runs the fastest, thus making the BEP algorithm a good option for large datasets. The main reason for the observed speedup of the BEP is that it learns only $\log_2(n)$ functions for a $n$-class problem and hence the speedup factor of the BEP over the OVA would potentially be better for larger $n$. 

\begin{table}
\caption{Time taken for learning final model and making predictions on test set (does not include validation time)}
\label{tab:times-taken}
 \begin{center}
 \begin{tabular}{|c||c|c|c||}
\hline
Algorithm	& CS  & OVA & BEP
\\ \hline \hline	
  \texttt{satimage} 	& 2153s & 76s & 44s
\\ \hline	
  \texttt{yeast}	& 5s & 7s & 2s
\\ \hline	
  \texttt{letter}	&9608s & 1055s & 313s
\\ \hline	
  \texttt{vehicle}	& 3s & 3s & 1s
\\ \hline	
  \texttt{image}	&222s & 16s & 6s
\\ \hline	
  \texttt{covertype}	&47974s & 23709s & 6786s		
\\ \hline	
\end{tabular}
\end{center}
\end{table}

\vspace{-0.5em}
\section{Conclusion}
\label{sec:concl}
\vspace{-0.5em}
The multiclass classification problem with a reject option, is a powerful abstraction that captures controlling the uncertainty of the classifier and is very useful in applications like medical diagnosis.
We formalized this problem via an evaluation metric, called the abstain loss,  and gave excess risk bounds relating the abstain loss to the Crammer-Singer surrogate, the one vs all hinge surrogate and also to the BEP surrogate which is a new surrogate and operates on a much smaller dimension.  Extending these results for other such evaluation metrics, in particular the abstain$(\alpha)$ loss for $\alpha>\half$, is an interesting future direction.




\bibliographystyle{icml2015}
\bibliography{Abstain_loss}

\newpage

\appendix
\twocolumn[
\begin{center}


\textbf{\Large Appendix} \\ 
\rule{50em}{0.1em}
\vspace{1em}
\end{center}
]

We break the Proof of Theorem \ref{thm:CS-OVA-abstain-excess-risk} into two parts consisting of the proof of excess risk bounds for the CS surrogate and the OVA surrogate respectively.
 \section{Proof of Excess Risk Bounds for the Crammer Singer Surrogate}
 \label{sec:app-A}
 Define the sets $\U_1,\ldots,\U_{n+1}$ such that $\U_i$ is the set of vectors $\u$ in $\R^n$, for which $\pred^\CS_\tau(\u)=i$
\begin{eqnarray*}
 \U^\tau_y &=& \{\u\in\R^n: u_y > u_j + \tau \text{ for all }j\neq y\}; ~~y\in[n]\\
 \U^\tau_{n+1} &=& \{\u\in\R^n: u_{(1)} \leq u_{(2)} + \tau \}.
\end{eqnarray*}

The following lemma gives some crucial, but straightforward to prove, (in)equalities satisfied by the Crammer-Singer surrogate.
\begin{lem}
 \label{lem:CS-identitites}
 \begin{eqnarray}
\forall y\in[n],  \forall \p\in\Delta_n &&  \nonumber\\
  \p^\top \bpsi^\CS(\e_y) &=& 2(1 - p_y) ,\hspace{3.2em} \label{eqn:CS-lem-1}\\
  \p^\top \bpsi^\CS(\0) &=& 1 ,\hspace{6.7em}  \label{eqn:CS-lem-2} \\
\forall \u\in\R^n, \forall y \in \argmax_{i} u_i\hspace{-1em} &,& \hspace{-1em}\forall y' \notin \argmax_{i} u_i \nonumber \\
  \psi^\CS(y,\u) &\geq& u_{(2)} - u_{(1)} + 1 ,\hspace{1em}   \label{eqn:CS-lem-3} \\
  \psi^\CS(y',\u) &\geq& u_{(1)} - u_{(2)} + 1  ,\hspace{1em}   \label{eqn:CS-lem-4}
\end{eqnarray}
where $\e_y$ is the vector in $\R^n$ with $1$ in the $y^{th}$ position and $0$ everywhere else.
\end{lem}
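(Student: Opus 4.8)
The plan is to verify each of the four statements by direct substitution into the definition $\psi^\CS(y,\u)=(\max_{j\neq y}u_j-u_y+1)_+$, writing $\bpsi^\CS(\u)=(\psi^\CS(1,\u),\ldots,\psi^\CS(n,\u))^\top$ so that $\p^\top\bpsi^\CS(\u)=\sum_{y=1}^n p_y\,\psi^\CS(y,\u)$. None of the four requires more than a one-line computation, so I expect no genuine obstacle; the only thing to watch is the bookkeeping of the order statistics $u_{(1)},u_{(2)}$ in the presence of ties.

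For \eqref{eqn:CS-lem-1} I would compute $\psi^\CS(y',\e_y)$ for each $y'$. When $y'=y$ we have $\max_{j\neq y}(\e_y)_j=0$ while $(\e_y)_y=1$, so $\psi^\CS(y,\e_y)=(0-1+1)_+=0$; when $y'\neq y$, the index $y$ lies among the $j\neq y'$, so $\max_{j\neq y'}(\e_y)_j=1$ while $(\e_y)_{y'}=0$, giving $\psi^\CS(y',\e_y)=(1-0+1)_+=2$. Hence $\p^\top\bpsi^\CS(\e_y)=\sum_{y'\neq y}2p_{y'}=2(1-p_y)$. For \eqref{eqn:CS-lem-2}, $\psi^\CS(y',\0)=(0-0+1)_+=1$ for every $y'$, so $\p^\top\bpsi^\CS(\0)=\sum_{y'}p_{y'}=1$ because $\p\in\Delta_n$.

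For \eqref{eqn:CS-lem-3} and \eqref{eqn:CS-lem-4} the key observation is how $\max_{j\neq y}u_j$ behaves according to whether $y$ attains the maximum of $\u$. If $y\in\argmax_i u_i$, then $u_y=u_{(1)}$ and deleting the single coordinate $y$ leaves $u_{(2)}$ as the largest remaining entry, so $\max_{j\neq y}u_j=u_{(2)}$ and therefore $\psi^\CS(y,\u)=(u_{(2)}-u_{(1)}+1)_+\geq u_{(2)}-u_{(1)}+1$, using $(a)_+\geq a$. If instead $y'\notin\argmax_i u_i$, then a maximizing coordinate survives the deletion of $y'$, so $\max_{j\neq y'}u_j=u_{(1)}$, and $u_{y'}$, not being a maximizer, satisfies $u_{y'}\leq u_{(2)}$; combining, $\psi^\CS(y',\u)=(u_{(1)}-u_{y'}+1)_+\geq u_{(1)}-u_{y'}+1\geq u_{(1)}-u_{(2)}+1$.

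The single point deserving a moment's care — and it is minor — is confirming that the identifications $\max_{j\neq y}u_j=u_{(2)}$ (for a maximizing $y$) and $\max_{j\neq y'}u_j=u_{(1)}$ together with $u_{y'}\leq u_{(2)}$ (for a non-maximizing $y'$) remain valid when several coordinates tie for the top value, in which case $u_{(1)}=u_{(2)}$; this holds because removing one occurrence of the maximum still leaves $u_{(2)}$ as the top remaining value, while any non-maximizing coordinate is strictly below $u_{(1)}=u_{(2)}$. Beyond this, the lemma is a routine direct computation.
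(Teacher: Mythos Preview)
Your proposal is correct and is exactly the kind of direct verification the paper has in mind; the paper itself states that these (in)equalities are ``straightforward to prove'' and omits any argument, so your one-line computations for each of \eqref{eqn:CS-lem-1}--\eqref{eqn:CS-lem-4} together with the tie-handling remark constitute precisely the intended proof.
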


The part of Theorem \ref{thm:CS-OVA-abstain-excess-risk} proved here is restated below.
\begin{thm*}
Let $n\in\N$ and $\tau\in(0,1)$. Then for all $\f:\X\>\R^n$ 
\vspace{-1em}
$$\er_D^{\ell }[\pred^\CS_\tau \circ \f] - \er_D^{\ell ,*} \leq  
\frac{\left(\er_D^{\psi^\CS}[\f] - \er_D^{\psi^\CS,*}\right)}{2 \min(\tau,1-\tau)} $$ 
\end{thm*}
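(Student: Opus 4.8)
The plan is to prove the bound pointwise in $x$ and then integrate. Fix $x\in\X$ and write $\p=\p_x\in\Delta_n$, $q=\max_{y}p_y$. Let $L_\psi(\p,\u)=\p^\top\bpsi^\CS(\u)$ be the conditional surrogate risk and $L_\ell(\p,t)=\sum_y p_y\ell(y,t)$ the conditional target risk, so that $L_\ell(\p,t)=1-p_t$ for $t\in[n]$ and $L_\ell(\p,n+1)=\half$. Since $\f$ ranges over \emph{all} functions $\X\>\R^n$, both Bayes risks decompose pointwise, $\er_D^{\psi^\CS,*}=\E_X[\inf_{\u}L_\psi(\p_X,\u)]$ and $\er_D^{\ell,*}=\E_X[\min_t L_\ell(\p_X,t)]$, and $L_\ell^*(\p):=\min_t L_\ell(\p,t)=\min(1-q,\half)$ by \Eqn{eqn:Bayes-abstain}. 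Hence it suffices to show, for every $\p\in\Delta_n$ and $\u\in\R^n$,
\[
L_\ell(\p,\pred^\CS_\tau(\u))-L_\ell^*(\p)\ \le\ \frac{L_\psi(\p,\u)-L_\psi^*(\p)}{2\min(\tau,1-\tau)},
\]
where $L_\psi^*(\p)=\inf_{\u}L_\psi(\p,\u)$; taking $\E_X$ then yields the theorem.

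Next I would record the two inputs that drive the proof. Evaluating $L_\psi$ at $\u=\e_{y}$ for a maximizing $y$ and at $\u=\0$, \Eqn{eqn:CS-lem-1} and \Eqn{eqn:CS-lem-2} of \Lem{lem:CS-identitites} give the only fact needed about the optimal surrogate risk: the upper bound $L_\psi^*(\p)\le\min\!\big(1,\,2(1-q)\big)$ (a smaller true value only helps). On the target side, the excess cost of abstaining is $L_\ell(\p,n+1)-L_\ell^*(\p)=(q-\half)_+$, and the excess cost of predicting a class $y'$ is $q-p_{y'}$ if $q>\half$ and $\half-p_{y'}$ if $q\le\half$.

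The heart of the argument is a case analysis on $t=\pred^\CS_\tau(\u)$, lower-bounding $L_\psi(\p,\u)$ via \Eqn{eqn:CS-lem-3} and \Eqn{eqn:CS-lem-4}. Put $\delta=u_{(1)}-u_{(2)}\ge 0$. \textbf{Abstain case} ($\delta\le\tau$, so $t=n+1$): if $\u$ has a unique maximizer $y^*$ then $\psi^\CS(y^*,\u)\ge 1-\delta$ and $\psi^\CS(y',\u)\ge 1+\delta$ for $y'\ne y^*$, giving $L_\psi(\p,\u)\ge 1+\delta(1-2p_{y^*})\ge 1-\tau(2q-1)_+$ (when $u_{(1)}=u_{(2)}$ one gets $L_\psi(\p,\u)\ge 1$ directly); combined with $L_\psi^*(\p)\le\min(1,2(1-q))$ and the excess cost $(q-\half)_+$, the required inequality holds even with the better constant $2(1-\tau)$. \textbf{Predict case} ($\delta>\tau$, so the maximizer $y^*=t$ is unique): \Eqn{eqn:CS-lem-3}, \Eqn{eqn:CS-lem-4} and $\psi^\CS\ge 0$ give $L_\psi(\p,\u)\ge p_{y^*}(1-\delta)_+ + (1-p_{y^*})(1+\delta)$; I then split on $\delta\le 1$ vs.\ $\delta>1$ and on $q\le\half$ vs.\ $q>\half$, using the elementary structural fact that a wrongly predicted class satisfies $p_{y^*}\le\half$ (only one class can exceed $\half$), so $1-2p_{y^*}\ge 0$ and the slack $\delta>\tau$ converts into a factor $2\tau$. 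After substituting $L_\psi^*(\p)\le\min(1,2(1-q))$, each sub-case collapses to an elementary inequality among $q,p_{y^*},\tau$ that holds with constant $2\min(\tau,1-\tau)$.

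The main obstacle is the bookkeeping in the predict case: choosing, in each sub-case, the tighter of the two upper bounds $1$ and $2(1-q)$ on $L_\psi^*$ (they switch at $q=\half$); handling $\delta>1$, where the first term of the lower bound on $L_\psi(\p,\u)$ is clipped to $0$; and checking that no sub-case degrades the constant below $2\min(\tau,1-\tau)$ (the abstain case contributes the $1-\tau$ side, the predict case the $\tau$ side). Tie-breaking in $\pred^\CS_\tau$ and in $\argmax$ is a minor nuisance, automatically resolved in the predict case since $\delta>\tau>0$ forces a strict maximizer.
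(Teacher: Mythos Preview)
Your proposal is correct and follows essentially the same route as the paper: reduce to the pointwise inequality, upper-bound $L_\psi^*(\p)$ via \Lem{lem:CS-identitites} at $\u=\e_y$ and $\u=\0$, lower-bound $L_\psi(\p,\u)$ via \Eqn{eqn:CS-lem-3}--\Eqn{eqn:CS-lem-4}, and do a case analysis. The only cosmetic differences are that the paper splits first on whether a majority class exists and then on the predictor's output (rather than the reverse), and that the paper uses the unclipped linear bound $\psi^\CS(y^*,\u)\ge 1-\delta$ from \Eqn{eqn:CS-lem-3} throughout, which makes your extra $\delta\le 1$ vs.\ $\delta>1$ split unnecessary.
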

\begin{proof}
We will show that $\forall \p\in\Delta_n$ and all $\u\in\R^d$
\begin{eqnarray}
 \lefteqn{\p^\top \bpsi^\CS(\u) - \inf_{u'\in\R^n} \p^\top \bpsi(\u') } \nonumber \\
 &\geq& 2 \min(\tau,1-\tau) \big( \p^\top \bell_{\pred^\CS_\tau(\u)} - \min_t \p^\top \bell_t \big) \;.
 \label{eqn:CS-excess-risk}
\end{eqnarray}
The Theorem simply follows from linearity of expectation. 

\textbf{Case 1:} $p_y\geq \half$ for some $y\in[n]$.
 
We have that $y \in \argmin_t \p^\top \bell _t $.

\textbf{Case 1a:} $\u\in\U^\tau_y$ 

The RHS of equation (\ref{eqn:CS-excess-risk}) is zero, and hence becomes trivial. 

\textbf{Case 1b:} $\u\in\U^\tau_{n+1}$ 

We have that $u_{(1)} - u_{(2)} \leq \tau$.\\
Let $q=\sum_{i \in \argmax_j u_j} p_i$. We then have
\begin{eqnarray}
 \lefteqn{\p^\top \bpsi^\CS(\u) - \p^\top \bpsi^\CS(\e_y) } \nonumber \\
 &\stackrel{(\ref{eqn:CS-lem-1})}{=}&
  \sum_{i: u_i=u_{(1)}} p_i \psi^\CS(i,\u) + \sum_{i: u_i<u_{(1)}}  p_y \psi^\CS(y,\u)   \nonumber \\  
  && \hspace{10em} -2(1-p_y) \nonumber \\
 &\stackrel{(\ref{eqn:CS-lem-3}), (\ref{eqn:CS-lem-4})}{\geq}&
 (2q-1)(u_{(2)} - u_{(1)}) -1 + 2p_y \nonumber \\
 &\geq &
 (2p_y-1)(1-\tau) \hspace{10em}\;. \label{eqn:CS-case-1b-psi}
\end{eqnarray}
The last inequality follows from $u_{(2)}-u_{(1)} \geq -\tau$  and the following observations. If $q>p_y$ then $u_{(1)}=u_{(2)}$, and if $q< p_y$ we have $q< \half$.
\begin{equation}
\label{eqn:CS-case-1b-ell}
 \p^\top \bell_{\pred^\CS_\tau(\u)} - \min_t \p^\top \bell_t = \p^\top \bell_{n+1} - \p^\top \bell_y = p_y - \half
\end{equation}

From Equations (\ref{eqn:CS-case-1b-psi}) and (\ref{eqn:CS-case-1b-ell}) we have
\begin{eqnarray}
\lefteqn{ \p^\top \bpsi^\CS(\u) - \inf_{u'\in\R^n} \p^\top \bpsi(\u') } \nonumber \\ 
&\geq& 
2  (1-\tau) \big( \p^\top \bell_{\pred^\CS_\tau(\u)} - \min_t \p^\top \bell_t \big) \label{eqn:CS-case1b}
\end{eqnarray}
\textbf{Case 1c:} $\u\in\R^n \setminus (\U^\tau_y \cup \U^\tau_{n+1}) $ \\
We have $\pred^\CS_\tau(\u)=y'\neq y$. Also $p_{y'}\leq 1-p_y \leq \half$ and  $u_{(1)}=u_{y'} > u_{(2)} + \tau$.
\begin{eqnarray}
 \lefteqn{ \p^\top \bpsi^\CS(\u) - \p^\top \bpsi^\CS(\e_y) } \nonumber \\
 &\stackrel{(\ref{eqn:CS-lem-1})}{=}&
 \left( \sum_{i=1; i\neq y'}^n p_i \psi^\CS(i,\u) +   p_{y'} \psi^\CS(y',\u) \right) \nonumber \\
 &&\hspace{10em} -  2(1 - p_y)  \nonumber \\
 &\stackrel{(\ref{eqn:CS-lem-3}), (\ref{eqn:CS-lem-4})}{\geq}&
 (1-2p_{y'}) (u_{y'} - u_{(2)})  -1 +p_y \nonumber \\
 &\geq&
 2\tau(p_y-p_{y'})  \hspace{4em}\text{(From Case 1c)} \label{eqn:CS-case-1c-psi}
\end{eqnarray}
We also have that
\begin{equation}
\label{eqn:CS-case-1c-ell}
\p^\top \bell_{\pred^\CS_\tau(\u)} - \min_t \p^\top \bell_t = \p^\top \bell_{y'} - \p^\top \bell_y = p_y - p_{y'}  ~~ 
\end{equation}

From Equations (\ref{eqn:CS-case-1c-psi}) and (\ref{eqn:CS-case-1c-ell}) we have
\begin{eqnarray}
\lefteqn{ \p^\top \bpsi^\CS(\u) - \inf_{u'\in\R^n} \p^\top \bpsi^\CS(\u')} \nonumber \\ 
&\geq& 2  \tau \big( \p^\top \bell_{\pred^\CS_\tau(\u)} - \min_t \p^\top \bell_t \big) \label{eqn:CS-case1c}
\end{eqnarray}
\textbf{Case 2:} $p_{y'}<\half$ for all $y'\in[n]$

We have that $ n+1 \in \argmin_t \p^\top \bell _t $ 

\textbf{Case 2a:} $\u\in \U^\tau_{n+1}$ (or $\pred^\CS_\tau(\u)=n+1$) 

The RHS of equation (\ref{eqn:CS-excess-risk}) is zero, and hence becomes trivial. 

\textbf{Case 2b:} $\u\in \R^n \setminus \U^\tau_{n+1}$ (or $\pred^\CS_\tau(\u) \neq n+1$)

Let $\pred^\CS_\tau(\u)=\argmax_{i} u_{i}=y$. We have that $u_{(1)}= u_y > u_{(2)} + \tau$ and $p_y<\half$.
\begin{eqnarray}
\lefteqn{\p^\top \bpsi^\CS(\u) - \p^\top \bpsi^\CS(\0) } \nonumber \\
&\stackrel{(\ref{eqn:CS-lem-2})}{=}&
\left( \sum_{i=1; i\neq y}^n p_i \psi^\CS(i,\u) +  p_y \psi^\CS(y,\u)  \right)-  1  \nonumber \\
&\stackrel{(\ref{eqn:CS-lem-3}), (\ref{eqn:CS-lem-4})}{\geq}&
(1-2p_y) (u_{(1)} - u_{(2)}) \nonumber \\
&\geq&
(1-2p_y) (\tau) \hspace{3em}\text{(From Case 2b)} \label{eqn:CS-case-2b-psi}
\end{eqnarray}
We also have that
\begin{equation}
\label{eqn:CS-case-2b-ell}
\p^\top \bell_{\pred^\CS_\tau(\u)} - \min_t \p^\top \bell_t = \p^\top \bell_{y} - \p^\top \bell_{n+1}=   \half - p_y
\end{equation}
From Equations (\ref{eqn:CS-case-2b-psi}) and (\ref{eqn:CS-case-2b-ell}) we have
\begin{eqnarray}
\lefteqn{ \p^\top \bpsi^\CS(\u) - \inf_{u'\in\R^n} \p^\top \bpsi^\CS(\u') } \nonumber \\ 
&\geq& 2  \tau \big( \p^\top \bell_{\pred^\CS_\tau(\u)} - \min_t \p^\top \bell_t \big) \label{eqn:CS-case2b}
\end{eqnarray}
Equation (\ref{eqn:CS-excess-risk}), and hence the Theorem, follows from Equations (\ref{eqn:CS-case1b}), (\ref{eqn:CS-case1c}) and (\ref{eqn:CS-case2b}).
\end{proof}
 

\section{Proof of Excess Risk Bounds for the One vs All Hinge Surrogate}
\label{sec:app-B}
Define the sets $\U_1,\ldots,\U_{n+1}$ such that $\U_i$ is the set of vectors $\u$ in $\R^n$, for which $\pred^\OVA_\tau(\u)=i$
\begin{eqnarray*}
 \U^\tau_y \hspace{-0.5em}&=&\hspace{-0.5em} \{\u\in\R^n: u_y  >  \tau, y = \argmax_{i\in [n]} u_i \}, ~ y\in[n]\\
 \U^\tau_{n+1} \hspace{-0.5em}&=&\hspace{-0.5em} \{\u\in\R^n:  u_j  \leq  \tau \text{ for all }j\in[n] \}.
\end{eqnarray*}


The following lemma gives some crucial, but straightforward to prove, (in)equalities satisfied by the OVA hinge surrogate.
\begin{lem}
 \label{lem:OVA-identitites}
 \begin{eqnarray}
 \forall y\in[n],  \forall \p\in\Delta_n \hspace{-1em}&,&\hspace{-1em} \forall \u\in\R^n   \nonumber \\
  \p^\top \bpsi^\OVA(2\cdot \e_y - \1) &=& 4(1 - p_y)   \label{eqn:OVA-lem-1}\\
  \p^\top \bpsi^\OVA(-\1) &=& 2   \label{eqn:OVA-lem-2} \\
  \psi^\OVA(y,\u) &\geq&  \sum_{j\in[n]} u_j  -  2u_y + n   \label{eqn:OVA-lem-3} 
\end{eqnarray}
where $\e_y$ is the vector in $\R^n$ with $1$ in the $y^{th}$ position and $0$ everywhere else.
\end{lem}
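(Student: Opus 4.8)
The plan is to verify all three (in)equalities directly from the definition of $\psi^\OVA$, rewritten in the slightly more convenient form $\psi^\OVA(y,\u) = (1-u_y)_+ + \sum_{i\neq y}(1+u_i)_+$, which is just a regrouping of the sum in the displayed definition. The whole lemma is a sequence of one-line computations, so the "proof" will mostly be bookkeeping; no step is genuinely hard.

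For \eqref{eqn:OVA-lem-1} I would substitute $\u = 2\e_y - \1$, so that $u_y = 1$ and $u_i = -1$ for every $i\neq y$, and then evaluate the $c$-th coordinate $\psi^\OVA(c,\u)$ of $\bpsi^\OVA(\u)$ for each $c\in[n]$. When $c = y$, both $(1-u_y)_+ = 0$ and every term $(1+u_i)_+ = 0$, so the coordinate vanishes; when $c\neq y$, we get $(1-u_c)_+ = 2$ together with exactly one nonzero term in $\sum_{i\neq c}(1+u_i)_+$, namely the $i=y$ term which equals $(1+1)_+ = 2$, so the coordinate equals $4$. Taking the inner product with $\p\in\Delta_n$ then gives $\p^\top \bpsi^\OVA(2\e_y-\1) = 4(1-p_y)$. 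Identity \eqref{eqn:OVA-lem-2} is easier still: at $\u = -\1$ every coordinate of $\bpsi^\OVA$ equals $(1-(-1))_+ = 2$ because all the $(1+u_i)_+$ terms are zero, and since $\p$ is a probability vector, $\p^\top \bpsi^\OVA(-\1) = 2$.

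For the inequality \eqref{eqn:OVA-lem-3} I would use the pointwise bound $(a)_+ \geq a$ on each hinge term: $(1-u_y)_+ \geq 1 - u_y$ and $(1+u_i)_+ \geq 1 + u_i$ for all $i\neq y$. Summing these gives $\psi^\OVA(y,\u) \geq (1-u_y) + \sum_{i\neq y}(1+u_i) = n - u_y + \sum_{i\neq y} u_i$, and substituting $\sum_{i\neq y} u_i = \sum_{j\in[n]} u_j - u_y$ yields the claimed lower bound $\sum_{j\in[n]} u_j - 2u_y + n$.

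The only place where a little care is needed is \eqref{eqn:OVA-lem-1}: one must correctly identify that when $c\neq y$ there is precisely one index (namely $i=y$) contributing a nonzero hinge term to $\sum_{i\neq c}(1+u_i)_+$, and track that this contributes $2$ rather than $0$. Everything else is immediate. As with \Lem{lem:CS-identitites}, the purpose of this lemma is simply to supply these three facts to the case analysis in the excess-risk proof for the OVA surrogate.
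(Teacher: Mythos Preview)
Your proposal is correct. The paper does not actually prove \Lem{lem:OVA-identitites}; it merely states that the (in)equalities are ``straightforward to prove,'' and your direct verification from the definition---evaluating $\bpsi^\OVA$ coordinatewise at the two specific points and applying $(a)_+\geq a$ termwise for the inequality---is exactly the intended computation.
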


The part of Theorem \ref{thm:CS-OVA-abstain-excess-risk} proved here is restated below.

\begin{thm*}
Let $n\in\N$ and $\tau\in(0,1)$. Then for all $f:\X\>\R^n$
\begin{eqnarray*} 
\lefteqn{\er_D^{\ell }[\pred^\OVA_\tau \circ \f] - \er_D^{\ell ,*} } \\
&\leq&  
\frac{1}{2 (1-|\tau|)}\left(\er_D^{\psi^\OVA}[\f] - \er_D^{\psi^\OVA,*}\right)
\end{eqnarray*}
\end{thm*}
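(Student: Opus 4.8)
The plan is to follow exactly the template used for the Crammer--Singer bound in Appendix \ref{sec:app-A}: reduce the excess risk bound to a pointwise inequality over the simplex, and then verify it by a case analysis driven by the structure of $\pred^\OVA_\tau$. Concretely, it suffices to show that for all $\p\in\Delta_n$ and all $\u\in\R^n$,
\begin{eqnarray*}
\p^\top \bpsi^\OVA(\u) - \inf_{\u'\in\R^n} \p^\top \bpsi^\OVA(\u')
&\geq& 2(1-|\tau|)\big(\p^\top \bell_{\pred^\OVA_\tau(\u)} - \min_t \p^\top\bell_t\big),
\end{eqnarray*}
after which the theorem follows by taking expectations over $X\sim D_\X$ and using linearity (the conditional distribution of $Y\mid X=x$ plays the role of $\p$). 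The first bookkeeping step is to identify $\inf_{\u'} \p^\top\bpsi^\OVA(\u')$: since the OVA surrogate decouples coordinatewise, the infimum is $2\sum_{i}\min(2(1-p_i), \cdot)$-type expression; in fact, using \eqref{eqn:OVA-lem-1} and \eqref{eqn:OVA-lem-2}, the relevant upper bound on the optimal surrogate risk is $\min\big(2, \min_y 4(1-p_y)\big)$, attained at $\u = -\1$ (when no class dominates) or $\u = 2\e_y - \1$ (when $p_y \geq \half$). So it is enough to lower bound $\p^\top\bpsi^\OVA(\u)$ minus this quantity.

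The case analysis mirrors the CS proof. In \textbf{Case 1}, $p_y \geq \half$ for some (unique) $y$, so $y \in \argmin_t \p^\top\bell_t$ and the reference point is $2\e_y-\1$. Split into (1a) $\u\in\U^\tau_y$, where the RHS is zero and nothing is needed; (1b) $\u\in\U^\tau_{n+1}$, where $\pred^\OVA_\tau(\u)=n+1$ so the $\ell$-gap is $p_y - \half$, and we must show $\p^\top\bpsi^\OVA(\u) - 4(1-p_y) \geq 2(1-|\tau|)(p_y-\half)$; (1c) $\pred^\OVA_\tau(\u)=y'\neq y$, where $p_{y'}\leq 1-p_y\leq\half$, the $\ell$-gap is $p_y - p_{y'}$, and we need $\p^\top\bpsi^\OVA(\u) - 4(1-p_y)\geq 2(1-|\tau|)(p_y-p_{y'})$. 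In \textbf{Case 2}, $p_{y'}<\half$ for all $y'$, so $n+1\in\argmin_t\p^\top\bell_t$ and the reference is $-\1$; (2a) $\u\in\U^\tau_{n+1}$ is trivial, and (2b) $\pred^\OVA_\tau(\u)=y$ with $u_y>\tau$ and $p_y<\half$, where the $\ell$-gap is $\half - p_y$ and we need $\p^\top\bpsi^\OVA(\u) - 2 \geq 2(1-|\tau|)(\half-p_y)$. In each nontrivial subcase the engine is \eqref{eqn:OVA-lem-3}: bound $\psi^\OVA(i,\u)\geq \sum_j u_j - 2u_i + n$ for $i\neq$ (the predicted/reference label) and bound the one special term from below by $0$ or by its hinge expression, then take the $\p$-weighted combination so that the $\sum_j u_j + n$ terms cancel and one is left with an affine expression in $\u$ of the form $c\cdot u_y + (\text{stuff})$ with $c = 1-2p_y$ or $c = 2p_{y'}-1$, which has the correct sign; finally substitute the constraint defining the relevant region ($u_y > \tau$, or $u_j\leq\tau$ for all $j$, or $u_y\leq\max_j u_j$) and the bound $|u_y|\leq 1$ for the ``clipped'' coordinates implicit in the optimal-surrogate comparison. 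One must also handle the degenerate clipping carefully: terms $(1-u_i)_+$ and $(1+u_i)_+$ are only piecewise linear, so \eqref{eqn:OVA-lem-3} is the linear lower bound valid everywhere, and separately one uses $(1\pm u_i)_+ \geq 0$ where a cruder bound suffices.

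I expect the main obstacle to be Case 1c and Case 2b, exactly as in the CS proof: getting the constant $2(1-|\tau|)$ (rather than something weaker) requires using the region constraint in a slightly clever direction and keeping track of whether $\tau$ is positive or negative --- this is why the bound has $1-|\tau|$ rather than $\min(\tau,1-\tau)$. The sign of $\tau$ matters because $\U^\tau_{n+1}$ and $\U^\tau_y$ are defined by thresholding at $\tau$, and the ``worst'' $\u$ in each region sits at the boundary $u_y = \tau$ or at $u_y = -1$ (the clip point). A secondary subtlety is the tie-breaking in $\argmax$ inside $\pred^\OVA_\tau$; since ties are broken arbitrarily, one should check that in Case 1c and 2b the inequality holds regardless of which maximizer is chosen, which it does because the bound only uses $u_{y}$ through $u_y > \tau$ and $p_y \leq \half$. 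Once these subcases are verified, assembling \eqref{eqn:OVA-lem-1}--\eqref{eqn:OVA-lem-3} into the pointwise inequality and invoking linearity of expectation closes the proof with $\xi(z) = z/(2(1-|\tau|))$.
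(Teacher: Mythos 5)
Your plan is correct and follows essentially the same route as the paper's Appendix B proof: the same pointwise inequality over $\Delta_n$, the same reference points $2\e_y-\1$ and $-\1$ via the analogues of \eqref{eqn:OVA-lem-1}--\eqref{eqn:OVA-lem-2}, the same five-subcase analysis with identical $\ell$-gaps, the linear lower bound \eqref{eqn:OVA-lem-3} combined with clipping to $[-1,1]$, and the constant $2(1-|\tau|)$ arising as $\min\bigl(2(1-\tau),\,2(1+\tau)\bigr)$ across the subcases. No substantive differences from the paper's argument.
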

\begin{proof}
We will show that $\forall \p\in\Delta_n$ and all $\u\in[-1,1]^d$
\begin{eqnarray}
 \lefteqn{\p^\top \bpsi^\OVA(\u) - \inf_{u'\in\R^n} \p^\top \bpsi^\OVA(\u')} \nonumber \\
 &\geq&
 2 (1-|\tau|) \big( \p^\top \bell_{\pred^\OVA_\tau(\u)} - \min_t \p^\top \bell_t \big)~~~~~~
 \label{eqn:OVA-excess-risk}
\end{eqnarray}
\vspace{-0.1em}
the Theorem simply follows from the observation that for all $\u\in\R^n$ clipping the components of $\u$ to $[-1,1]$ does not increase $\psi^\OVA(y,\u)$ for any $y$, and by linearity of expectation. 

\textbf{Case 1:} $p_y\geq \half$ for some $y\in[n]$.
 
We have that $y \in \argmin_t \p^\top \bell _t $.

\textbf{Case 1a:} $\u\in [-1,1]^n \cap  \U^\tau_y$ 

The RHS of equation (\ref{eqn:OVA-excess-risk}) is zero, and hence becomes trivial. 

\textbf{Case 1b:} $\u\in [-1,1]^n \cap \U^\tau_{n+1}$ 

We have that $ \max_j u_j \leq \tau$.
\begin{eqnarray}
 \lefteqn{\p^\top \bpsi^\OVA(\u)} \nonumber \\ 
 &\stackrel{\ref{eqn:OVA-lem-3}}{\geq}&
 \sum_{i=1}^n (1-2p_i) u_i + n     \nonumber \\
 &\geq&
 \sum_{i\in[n]\setminus\{y\}} (1-2p_i) u_i + (2p_y-1)(-\tau)  +n  \nonumber \\
 &\geq&
 \sum_{i\in[n]} (2p_i-1) + (2p_y-1)(-\tau-1) +n  \nonumber \\
 &=&
 (2p_y-1)(-\tau-1) + 2 \nonumber
\end{eqnarray}
And hence we have
\begin{eqnarray}
 \lefteqn{\p^\top \bpsi^\OVA(\u) - \p^\top \bpsi^\OVA(2\cdot \e_y - 1)} \nonumber \\ 
 &\stackrel{\ref{eqn:OVA-lem-1}}{\geq}&
 (2p_y-1)(-\tau-1) + 2 - 4(1-p_y) \nonumber \\
 &=&
 (2p_y-1)(1-\tau) \label{eqn:OVA-case-1b-psi}
\end{eqnarray}

We also have
\begin{equation}
\label{eqn:OVA-case-1b-ell}
 \p^\top \bell_{\pred^\OVA_\tau(\u)} - \min_t \p^\top \bell_t = \p^\top \bell_{n+1} - \p^\top \bell_y = p_y - \half
\end{equation}

From Equations (\ref{eqn:OVA-case-1b-psi}) and (\ref{eqn:OVA-case-1b-ell}) we have for all $\u\in [-1,1]^n \cap \U^\tau_{n+1}$ 
\begin{eqnarray}
\lefteqn{\p^\top \bpsi^\OVA(\u) - \inf_{u'\in\R^n} \p^\top \bpsi^\OVA(\u')} \nonumber \\
&\geq&  2  (1-\tau) \big( \p^\top \bell_{\pred^\OVA_\tau(\u)} - \min_t \p^\top \bell_t \big) \label{eqn:OVA-case1b}
\end{eqnarray}
\textbf{Case 1c:} $\u\in[-1,1]^n \setminus (\U^\tau_y \cup \U^\tau_{n+1}) $ 

We have $\pred^\OVA_\tau(\u)=y'\neq y$. Also $p_{y'}\leq \frac{1}{2}$ ;  $u_{y'} > \tau$ and $u_{y'}\geq u_y$.
\begin{eqnarray}
 \lefteqn{\p^\top \bpsi^\OVA(\u) } \nonumber  \\ 
 &\stackrel{\ref{eqn:OVA-lem-3}}{\geq}&
 \left( \sum_{i=1}^n (1-2p_i) u_i + n   \right) \nonumber \\
 &\geq&
 \left( \sum_{i\in[n]\setminus\{y'\}} (1-2p_i) u_i + (1 - 2p_{y'})(\tau)  +n  \right)  \hspace{2em} \nonumber \\
 &\geq&
 \left( \sum_{i\in[n]} (2p_i-1) + (1 - 2p_{y'})(\tau + 1) +n \right)  \;.\nonumber
 \end{eqnarray}
 And hence we have,
 \begin{eqnarray}
 \lefteqn{\p^\top \bpsi^\OVA(\u) - \p^\top \bpsi^\OVA(2\cdot \e_y - 1)} \nonumber \\ 
 &\stackrel{\ref{eqn:OVA-lem-1}}{\geq}&
 2+ (1-2p_{y'})(\tau+1) -4+4p_y \nonumber \\
 &=&
 (1-2p_{y'})(\tau+1) +2(2p_y-1) \nonumber \\
 & \geq &
 (1-2p_{y'})(\tau+1) +(1+\tau)\cdot(2p_y-1) \nonumber \\
 &=&
 2(1+\tau)(p_y - p_{y'}) \;.\label{eqn:OVA-case-1c-psi}
\end{eqnarray}
We also have that
\begin{equation}
\label{eqn:OVA-case-1c-ell}
\p^\top \bell_{\pred^\OVA_\tau(\u)} - \min_t \p^\top \bell_t = \p^\top \bell_{y'} - \p^\top \bell_y = p_y - p_{y'}   
\end{equation}

From Equations (\ref{eqn:OVA-case-1c-psi}) and (\ref{eqn:OVA-case-1c-ell}) we have for all $\u\in[-1,1]^n \setminus (\U^\tau_y \cup \U^\tau_{n+1})$ 
\begin{eqnarray}
\lefteqn{\p^\top \bpsi^\OVA(\u) - \inf_{u'\in\R^n} \p^\top \bpsi^\OVA(\u')} \nonumber  \\
&\geq& 2 (1+ \tau )\big( \p^\top \bell_{\pred^\OVA_\tau(\u)} - \min_t \p^\top \bell_t \big) \label{eqn:OVA-case1c}
\end{eqnarray}

\textbf{Case 2:} $p_{y'}<\half$ for all $y'\in[n]$

We have that $ n+1 \in \argmin_t \p^\top \bell _t $ 

\textbf{Case 2a:} $\u\in \U^\tau_{n+1}$ 

The RHS of equation (\ref{eqn:CS-excess-risk}) is zero, and hence becomes trivial. 

\textbf{Case 2b:} $\u\in [-1,1]^n \setminus \U^\tau_{n+1}$ 

Let $\pred^\OVA_\tau(\u)=\argmax_{i} u_{i}=y$. We have that $u_y \geq  \tau$ and $p_y<\half$.
\begin{eqnarray}
\lefteqn{\p^\top \bpsi^\OVA(\u) - \p^\top \bpsi^\OVA(-\1) } \nonumber \\
&\stackrel{(\ref{eqn:OVA-lem-2})}{=}&
\left( \p^\top \bpsi^\OVA(\u) \right)-  2  \nonumber \\
&\stackrel{(\ref{eqn:OVA-lem-3})}{\geq}&
\left( \sum_{i=1}^n (1-2p_i) u_i + n \right) -2 \nonumber \\
&\geq&
\left( \sum_{i\in[n]\setminus\{y\}} (1-2p_i) u_i + (1-2p_y)(\tau)  +n  \right) -  2   \nonumber \\
&\geq&
\left( \sum_{i\in[n]} (2p_i-1)  + (1-2p_y)(\tau+1)  +n  \right) -  2 \nonumber \\
&=&
(1-2p_y)(\tau+1) \label{eqn:OVA-case-2b-psi}
\end{eqnarray}

We also have that
\begin{equation}
\label{eqn:OVA-case-2b-ell}
\p^\top \bell_{\pred^\OVA_\tau(\u)} - \min_t \p^\top \bell_t = \p^\top \bell_{y} - \p^\top \bell_{n+1} = \half - p_y
\end{equation}
From Equations (\ref{eqn:OVA-case-2b-psi}) and (\ref{eqn:OVA-case-2b-ell}) we have for all $\u\in [-1,1]^n \setminus \U^\tau_{n+1}$ 
\begin{eqnarray}
\lefteqn{\p^\top \bpsi^\OVA(\u) - \inf_{u'\in\R^n} \p^\top \bpsi^\OVA(\u')} \nonumber \\
&\geq& 2  (1+\tau) \big( \p^\top \bell_{\pred^\OVA_\tau(\u)} - \min_t \p^\top \bell_t \big) \label{eqn:OVA-case2b}
\end{eqnarray}
Equation (\ref{eqn:OVA-excess-risk}), and hence the Theorem, follows from Equations (\ref{eqn:OVA-case1b}), (\ref{eqn:OVA-case1c}) and (\ref{eqn:OVA-case2b}).
\end{proof}

\section{Proof of Excess Risk Bounds for the BEP Surrogate}
\label{sec:app-C}
The following lemma gives some crucial, but straightforward to prove, (in)equalities satisfied by the BEP surrogate.
\begin{lem}
 \label{lem:BEP-identitites}
 \begin{eqnarray}
\forall y,y'\in[n],   \p\in\Delta_n ,  \u\in\hspace{-0.8em}&\R^n&\hspace{-0.8em}, y'\neq B^{-1}(\sign(-\u))   \nonumber \\
  \p^\top \bpsi^\BEP(-B(y)) &=& 2(1 - p_y) 		  \label{eqn:BEP-lem-1} \\
  \p^\top \bpsi^\BEP(\0) &=& 1  			  \label{eqn:BEP-lem-2} \\
  \psi^\BEP(B^{-1}(\sign(-\u)),\u) &\geq& - \min_j |u_j| + 1  \label{eqn:BEP-lem-3} \\
  \psi^\BEP(y',\u) &\geq& \min_j |u_j| + 1 		   \label{eqn:BEP-lem-4}
\end{eqnarray}
\end{lem}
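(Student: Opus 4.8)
The plan is to establish all four (in)equalities of \Lem{lem:BEP-identitites} by directly unwinding the definition $\psi^\BEP(y,\u)=\big(\max_{j\in[d]}B_j(y)u_j+1\big)_+$, leaning on two elementary \emph{structural facts} about the encoding: since $B(y)\in\{\pm1\}^d$, each summand satisfies $B_j(y)u_j\in\{u_j,-u_j\}$; and since $B$ is a bijection, the codewords of two distinct labels differ in at least one coordinate.

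First I would record the single observation that drives (\ref{eqn:BEP-lem-3}) and (\ref{eqn:BEP-lem-4}): writing $z:=B^{-1}(\sign(-\u))$ we have $B_j(z)=\sign(-u_j)$ for every $j$, so $B_j(z)u_j=-|u_j|$ (this is consistent with the convention $\sign(0)=1$, since then $-|u_j|=0=B_j(z)u_j$). Hence $\max_j B_j(z)u_j=-\min_j|u_j|\le 0$, which gives $\psi^\BEP(z,\u)=\big(1-\min_j|u_j|\big)_+\ge 1-\min_j|u_j|$, i.e.\ (\ref{eqn:BEP-lem-3}). For (\ref{eqn:BEP-lem-4}), if $y'\neq z$ pick a coordinate $j_0$ with $B_{j_0}(y')\neq B_{j_0}(z)=\sign(-u_{j_0})$, i.e.\ $B_{j_0}(y')=\sign(u_{j_0})$, so $B_{j_0}(y')u_{j_0}=|u_{j_0}|\ge\min_j|u_j|\ge 0$; therefore $\max_j B_j(y')u_j\ge\min_j|u_j|$ and the outer $(\cdot)_+$ is inactive, yielding $\psi^\BEP(y',\u)\ge 1+\min_j|u_j|$.

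For (\ref{eqn:BEP-lem-2}), setting $\u=\0$ makes every $\psi^\BEP(i,\0)=(0+1)_+=1$, so $\p^\top\bpsi^\BEP(\0)=\sum_{i\in[n]}p_i=1$. For (\ref{eqn:BEP-lem-1}), evaluate $\psi^\BEP(i,-B(y))=\big(\max_j(-B_j(i)B_j(y))+1\big)_+$: if $i=y$ every term is $-1$ and the value is $0$; if $i\neq y$ the codewords $B(i)$ and $B(y)$ disagree in some coordinate, so the maximum over $j$ of $-B_j(i)B_j(y)$ (which lies in $\{-1,+1\}$) equals $+1$ and the value is $2$. Summing against $\p$ gives $\p^\top\bpsi^\BEP(-B(y))=2\sum_{i\neq y}p_i=2(1-p_y)$.

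I expect no genuine obstacle here: everything reduces to the two structural facts about $B$, together with careful bookkeeping of the $\sign(0)=1$ convention and of when the clamp $(\cdot)_+$ is active (only for the matching label $z$ in (\ref{eqn:BEP-lem-3}), and never elsewhere). These four facts then play for the proof of \Thm{thm:BEP-abstain-excess-risk} exactly the role that (\ref{eqn:CS-lem-1})--(\ref{eqn:CS-lem-4}) play for Crammer--Singer: for any $\p$ and $\u$ one splits $\p^\top\bpsi^\BEP(\u)$ via (\ref{eqn:BEP-lem-3})--(\ref{eqn:BEP-lem-4}) into the lower bound $1+\min_j|u_j|\,(1-2p_z)$, and then compares against the reference values (\ref{eqn:BEP-lem-1})--(\ref{eqn:BEP-lem-2}) while tracking $\min_j|u_j|$ relative to the threshold $\tau$ across the cases $p_y\ge\tfrac12$ versus $\max_y p_y<\tfrac12$.
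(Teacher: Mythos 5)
Your proposal is correct, and it supplies exactly the direct verification from the definition of $\psi^\BEP$ that the paper declares ``straightforward to prove'' and omits: the two structural facts about the codewords (that $B_j(z)u_j=-|u_j|$ for $z=B^{-1}(\sign(-\u))$, and that distinct labels disagree in some coordinate) are precisely what is needed, and you handle the $\sign(0)=1$ convention and the activation of $(\cdot)_+$ correctly. Nothing further is required.
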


\begin{thm*}
Let $n\in\N$ and $\tau\in(0,1)$. Let $n=2^d$. Then for all $f:\X\>\R^d$
\begin{eqnarray*}
\er_D^{\ell }[\pred^\BEP_\tau \circ \f] - \er_D^{\ell ,*} 
&\leq&  
\frac{\left(\er_D^{\psi^\BEP}[\f] - \er_D^{\psi^\BEP,*}\right) }{2 \min(\tau,1-\tau)}
\end{eqnarray*}
\end{thm*}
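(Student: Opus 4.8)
The plan is to follow the same pointwise-reduction strategy used for the Crammer--Singer surrogate in Appendix~\ref{sec:app-A}. By linearity of expectation it suffices to prove, for every $\p\in\Delta_n$ and every $\u\in\R^d$,
\begin{eqnarray}
\lefteqn{\p^\top\bpsi^\BEP(\u)-\inf_{\u'\in\R^d}\p^\top\bpsi^\BEP(\u')}\nonumber\\
&\geq& 2\min(\tau,1-\tau)\big(\p^\top\bell_{\pred^\BEP_\tau(\u)}-\min_t\p^\top\bell_t\big).
\label{eqn:BEP-excess-plan}
\end{eqnarray}
First I would record that $\inf_{\u'}\p^\top\bpsi^\BEP(\u')$ is attained at one of the ``corner'' points: either at $-B(y)$ for the maximizing class $y$, giving value $2(1-p_y)$ by \eqref{eqn:BEP-lem-1}, or at $\0$, giving value $1$ by \eqref{eqn:BEP-lem-2}. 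Which one is smaller is governed by whether $\max_y p_y\geq\half$, exactly as in the CS proof — this is because pushing all $d$ coordinates far in the direction $-B(y)$ zeros out $\psi^\BEP(y,\cdot)$ but forces $\psi^\BEP(y',\cdot)\to\infty$ for $y'\neq y$; since $B$ is a bijection, for every $y'\neq y$ at least one bit of $B(y')$ disagrees with $B(y)$, which is the structural fact that makes \eqref{eqn:BEP-lem-4} work.

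Then I would split into the same cases. \textbf{Case 1:} $p_y\geq\half$ for some (unique) $y\in[n]$, so $y\in\argmin_t\p^\top\bell_t$ and the reference value is $2(1-p_y)$. Within this case: (1a) if $\u\in\U^\tau_y$ the right side of \eqref{eqn:BEP-excess-plan} is $0$ and there is nothing to prove; (1b) if $\u\in\U^\tau_{n+1}$, i.e.\ $\min_j|u_j|\leq\tau$, use \eqref{eqn:BEP-lem-3} and \eqref{eqn:BEP-lem-4} to lower-bound $\p^\top\bpsi^\BEP(\u)-2(1-p_y)$, splitting the sum over classes according to whether $B^{-1}(\sign(-\u))=y$ or not; the excess abstain risk here is $p_y-\half$, and the bound $2(1-\tau)(p_y-\half)$ should drop out using $\min_j|u_j|\geq 0 > -\tau$ together with $q:=\sum_{i:\,\sign(-\u)=B(i)\text{ consistent}}p_i$ being essentially $p_y$ or at most $\half$, mirroring the $q$ argument in Case~1b of the CS proof; (1c) if $\pred^\BEP_\tau(\u)=y'\neq y$ (so $\min_j|u_j|>\tau$ and $B^{-1}(\sign(-\u))=y'$ with $p_{y'}\leq\half$), again bound via \eqref{eqn:BEP-lem-3}, \eqref{eqn:BEP-lem-4}, using $\min_j|u_j|>\tau$, to get $2\tau(p_y-p_{y'})$ against excess risk $p_y-p_{y'}$. \textbf{Case 2:} $p_{y'}<\half$ for all $y'$, so $n+1\in\argmin_t\p^\top\bell_t$ and the reference value is $1$; (2a) $\u\in\U^\tau_{n+1}$ is trivial, and (2b) $\pred^\BEP_\tau(\u)=y$ with $p_y<\half$ gives, via \eqref{eqn:BEP-lem-2},\eqref{eqn:BEP-lem-3},\eqref{eqn:BEP-lem-4} and $\min_j|u_j|>\tau$, the bound $(1-2p_y)\tau$ against excess risk $\half-p_y$. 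Collecting the three nontrivial cases yields \eqref{eqn:BEP-excess-plan} with constant $2\min(\tau,1-\tau)$.

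I would first need to prove Lemma~\ref{lem:BEP-identitites}: \eqref{eqn:BEP-lem-1} and \eqref{eqn:BEP-lem-2} are direct evaluations of $\psi^\BEP$ at the stated points ($\psi^\BEP(y,-B(y))=(\max_j B_j(y)(-B_j(y))+1)_+=(-1+1)_+=0$ and $\psi^\BEP(y',-B(y))=2$ when $y'\neq y$ since some bit differs, then average against $\p$; at $\u=\0$ every term is $1$); \eqref{eqn:BEP-lem-3} holds because $\sign(-\u)=B(B^{-1}(\sign(-\u)))$ makes every $B_j(\cdot)u_j=-|u_j|$, so the max is $-\min_j|u_j|$; and \eqref{eqn:BEP-lem-4} holds because if $y'\neq B^{-1}(\sign(-\u))$ then $B_j(y')\neq\sign(-u_j)$ for at least one $j$, for which $B_j(y')u_j=|u_j|\geq\min_j|u_j|$, so the max inside is $\geq\min_j|u_j|$ (and the clipping/$(\cdot)_+$ is inactive). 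The main obstacle I anticipate is the bookkeeping in Case~1b: one must carefully partition $[n]$ by the behavior of $\sign(-\u)$ relative to each $B(i)$ and track how the $\min_j|u_j|$ term interacts with the weights $p_i$ — the analogue of the ``if $q>p_y$ then $u_{(1)}=u_{(2)}$'' dichotomy from the CS proof — to ensure the coefficient comes out as $2(1-\tau)$ rather than something weaker; the other cases are comparatively mechanical once the lemma is in hand.
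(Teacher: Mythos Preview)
Your proposal is correct and follows essentially the same case-by-case structure as the paper's proof in Appendix~\ref{sec:app-C}. One simplification you may appreciate: Case~1b is easier than you anticipate, because $B^{-1}(\sign(-\u))$ is a \emph{single} class $y'$ (no ties, unlike the CS argmax), so your $q$ is just $p_{y'}$ and the only dichotomy needed is $y'=y$ (use $\min_j|u_j|\leq\tau$ with $1-2p_y\leq 0$) versus $y'\neq y$ (use $p_{y'}\leq\half$ with $\min_j|u_j|\geq 0$), both yielding $\p^\top\bpsi^\BEP(\u)\geq(2p_y-1)(-\tau)+1$.
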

\begin{proof}
We will show that $\forall\p\in\Delta_n$ and all $\u\in\R^d$ 
\begin{eqnarray}
\lefteqn{\p^\top \bpsi^\BEP(\u) - \inf_{u'\in\R^d} \p^\top \bpsi^\BEP(\u') } \nonumber \\
&\geq& 2\min(\tau,1-\tau) (\p^\top \bell_{\pred^\BEP_\tau(\u)} - \min_t \p^\top \bell_t) ~~~~\label{eqn:BEP-excess-risk}
\end{eqnarray}
The theorem follows by linearity of expectation.

\textbf{Case 1: } $p_y \geq \half$ for some $y\in[n]$

We have that $y\in\argmin_{t}\p^\top \bell_t$

\textbf{Case 1a:} $\u\in \U^\tau_y $ (or $\pred^\BEP_\tau(\u)=y$) 

The RHS of equation (\ref{eqn:BEP-excess-risk}) is zero, and hence becomes trivial. 
 
\textbf{Case 1b:} $\u\in \U^\tau_{n+1}$ (or $\pred^\BEP_\tau(\u)=n+1$) 

Let $y'= B^{-1}(\sign(-\u))$. We have   $\min_j |u_j| \leq \tau$.
\begin{eqnarray}
 \lefteqn{\p^\top\bpsi^\BEP(\u) } \nonumber \\
 &=&
 p_{y' }\psi^\BEP(y',\u) + \sum_{i\in[n]\setminus \{y'\}} p_i \psi^\BEP(i,\u) \nonumber \\
 &\stackrel{(\ref{eqn:BEP-lem-3}),(\ref{eqn:BEP-lem-4})}{\geq}&
 p_{y' }(-\min_{j\in[d]} |u_j|)  + (1-p_{y'}) (\min_{j\in[d]} |u_j|) + 1 \nonumber \\
 &\geq&
 (2p_y -1 )(-\tau) + 1  \;. \nonumber
\end{eqnarray}
The last inequality in the above follows from the observation that if $y'\neq y$, then $p_{y'} < 1-p_y \leq \half$. We thus have
\begin{eqnarray}
 \lefteqn{\p^\top\bpsi^\BEP(\u) - \p^\top \bpsi^\BEP(-B(y)) } \nonumber \\
 &\stackrel{(\ref{eqn:BEP-lem-1})}{\geq}&
 (2p_y -1 )(1-\tau)  \;.\label{eqn:BEP-case-1b-psi}
\end{eqnarray}

We also have that
\begin{equation}
\label{eqn:BEP-case-1b-ell}
 \p^\top \bell_{\pred^\BEP_\tau(\u)} - \min_t \p^\top \bell_t = \p^\top \bell_{n+1} - \p^\top \bell_y= p_y - \half
\end{equation}
From Equations (\ref{eqn:BEP-case-1b-psi}) and (\ref{eqn:BEP-case-1b-ell}) we have that
\begin{eqnarray}\
 \lefteqn{\p^\top \bpsi^\BEP(\u) - \inf_{u'\in\R^d} \p^\top \bpsi^\BEP(\u')} \nonumber \\
 &\geq&  2(1-\tau) (\p^\top \bell_{\pred^\BEP_\tau(\u)} - \min_t \p^\top \bell_t) \label{eqn:BEP-case1b}
\end{eqnarray}
\textbf{Case 1c: $\u\in \R^d \setminus (\U^\tau_y \cup \U^\tau_{n+1})$} 

Let $B^{-1}(\sign(-\u))=\pred(\u)=y'$ for some $y'\neq y$. We have
$p_{y'}\leq 1-p_y \leq \half$, and  
$\min_j |u_j|>\tau$ and
\begin{eqnarray}
\lefteqn{\p^\top\bpsi^\BEP(\u) } \nonumber \\
&=& 
p_{y'} \psi^\BEP(y',\u) + \sum_{i=1;i\neq y'}^n p_i \psi^\BEP(i,\u) \nonumber\\ 
&\stackrel{(\ref{eqn:BEP-lem-3}),(\ref{eqn:BEP-lem-4})}{\geq}&
p_{y'} (-\min_j |u_j|) + (1-p_{y'}) (\min_j |u_j|) + 1\nonumber\\ 
&\geq&
\tau(1-2p_{y'}) + 1 \hspace{2.6em}\text{(From case 1c)} \nonumber
\end{eqnarray}
Hence we get
\begin{eqnarray}
 \p^\top\bpsi^\BEP(\u) - \p^\top\bpsi^\BEP(-B(y))  
 \stackrel{(\ref{eqn:BEP-lem-1})}{\geq}
2\tau(p_y-p_{y'})  \label{eqn:BEP-case-1c-psi} 
\end{eqnarray}

We also have that
\begin{equation}
\label{eqn:BEP-case-1c-ell}
\p^\top \bell_{\pred^\BEP_\tau(\u)} - \min_t \p^\top \bell_t = \p^\top \bell_{y'} - \p^\top \bell_y = p_y - p_{y'}   
\end{equation}
From Equations (\ref{eqn:BEP-case-1c-psi}) and (\ref{eqn:BEP-case-1c-ell}) we have that
\begin{eqnarray}
 \lefteqn{\p^\top \bpsi^\BEP(\u) - \inf_{u'\in\R^d} \p^\top \bpsi^\BEP(\u')} \nonumber \\
 &\geq& 2(\tau) (\p^\top \bell_{\pred^\BEP_\tau(\u)} - \min_t \p^\top \bell_t) \label{eqn:BEP-case1c}
\end{eqnarray}
\textbf{Case 2: } $p_y<\half$ for all $y\in[n]$
 
We have that $ n+1 \in \argmin_t \p^\top \bell_t $

\textbf{Case 2a: $\u\in \U^\tau_{n+1}$}

The RHS of equation (\ref{eqn:BEP-excess-risk}) is zero, and hence becomes trivial. 

\textbf{Case 2b:}  $\u\in \R^d \setminus \U^\tau_{n+1}$

Let 
$B^{-1}(\sign(-\u))=y'=\pred^\BEP_\tau(\u)$ for some $y'\in[n]$. We have $p_{y'}<\half$ and
$\min_j |u_j|>\tau$.
\begin{eqnarray}
\lefteqn{\p^\top\bpsi^\BEP(\u) - \p^\top \bpsi^\BEP(\0)} \nonumber \\ 
&\stackrel{(\ref{eqn:BEP-lem-2})}{=}& 
p_{y'} \psi^\BEP(y',\u) + \sum_{i=1;i\neq y'}^n p_i \psi^\BEP(i,\u) -  1 \nonumber \\ 
&\stackrel{(\ref{eqn:BEP-lem-3}),(\ref{eqn:BEP-lem-4})}{\geq}&
-p_{y'} \min_j |u_j| +  (1-p_{y'}) \min_j |u_j| \nonumber \\
&\geq&
(1-2p_{y'})\tau  \label{eqn:BEP-case-2b-psi}
\end{eqnarray}
We also have that
\begin{equation}
\label{eqn:BEP-case-2b-ell}
\p^\top \bell_{\pred^\BEP_\tau(\u)} - \min_t \p^\top \bell_t = \p^\top \bell_{y'} - \p^\top \bell_{n+1} =  \half - p_{y'}
\end{equation}
From Equations (\ref{eqn:BEP-case-2b-psi}) and (\ref{eqn:BEP-case-2b-ell}) we have that
\begin{eqnarray}
 \lefteqn{\p^\top \bpsi^\BEP(\u) - \inf_{u'\in\R^d} \p^\top \bpsi^\BEP(\u')} \nonumber \\
 &\geq& 2\tau (\p^\top \bell_{\pred^\BEP_\tau(\u)} - \min_t \p^\top \bell_t) \label{eqn:BEP-case2b}
\end{eqnarray}
Equation (\ref{eqn:BEP-excess-risk}), and hence the Theorem, follows from equations (\ref{eqn:BEP-case1b}), (\ref{eqn:BEP-case1c}) and (\ref{eqn:BEP-case2b}).
\end{proof}

\end{document}